\documentclass[11pt]{article}
\usepackage[letterpaper,margin=1in]{geometry}

\usepackage{amsmath,amssymb,amsthm}
\usepackage{mathtools}
\usepackage{xcolor}
\usepackage{xspace}
\usepackage{graphicx}
\usepackage[numbers,round]{natbib}
\usepackage[hyphens]{url}
\usepackage{hyperref}
\usepackage[nameinlink,capitalise]{cleveref}
\usepackage{authblk}

\crefname{equation}{Eq.}{Eq.}
\Crefname{equation}{Eq.}{Eq.}
\crefname{lemma}{Lemma}{Lemma}
\Crefname{lemma}{Lemma}{Lemma}
\crefname{theorem}{Thm.}{Thm.}
\Crefname{theorem}{Thm.}{Thm.}
\crefname{section}{Sec.}{Sec.}
\Crefname{section}{Sec.}{Sec.}
\crefname{subsection}{Sec.}{Sec.}
\Crefname{subsection}{Sec.}{Sec.}
\crefname{assumption}{Assumption}{Assumption}
\Crefname{assumption}{Assumption}{Assumption}

\newtheorem{theorem}{Theorem}
\newtheorem{lemma}{Lemma}
\newtheorem{corollary}{Corollary}
\newtheorem{definition}{Definition}
\newtheorem{assumption}{Assumption}
\theoremstyle{remark}
\newtheorem{remark}{Remark}
\newtheorem{example}{Example}

\newcommand*\dif{\mathop{}\!\mathrm{d}}
\newcommand{\DataDistri}{\ensuremath{\mathcal{D}}\xspace}
\newcommand{\Measures}{\ensuremath{\mathcal{M}_1^+}\xspace}
\newcommand{\E}{\ensuremath{\mathbb{E}}\xspace}
\newcommand{\R}{\ensuremath{\mathbb{R}}\xspace}
\newcommand{\X}{\ensuremath{\mathcal{X}}\xspace}
\newcommand{\Y}{\ensuremath{\mathcal{Y}}\xspace}
\newcommand{\KL}{\ensuremath{\operatorname{KL}}\xspace}
\newcommand{\DataSpace}{\ensuremath{\Omega}\xspace}
\newcommand{\DataSpaceBorel}{\ensuremath{\mathcal{B}(\Omega)}\xspace}
\newcommand{\SampleSpace}{\ensuremath{\DataSpace^N}\xspace}
\newcommand{\SampleSpaceBorel}{\ensuremath{\mathcal{B}(\DataSpace)^{\otimes N}}\xspace}
\newcommand{\learner}{\ensuremath{\mathcal{A}}\xspace}
\newcommand{\Hypo}{\ensuremath{\mathcal{H}}\xspace}
\newcommand{\HypoBorel}{\ensuremath{\mathcal{F}}\xspace}
\newcommand{\rhoAlways}{\ensuremath{\rho_{\mathrm{always}}}\xspace}
\newcommand{\rhoNever}{\ensuremath{\rho_{\mathrm{never}}}\xspace}
\newcommand{\rhoDeployed}{\ensuremath{\rho_{\textsc{lupi}}}\xspace}

\title{Quantifying the Value of Privileged Information\\ Using a PAC-Bayesian Approach}

\author[1,2,3]{Vasily Bokov}
\author[3]{Sebastian Schmitt}
\author[1,2]{Vedran Dunjko}
\author[1,2]{Hao Wang}

\affil[1]{$\langle$aQa$^{\mathrm{L}}\rangle$, Leiden University, The Netherlands}
\affil[2]{LIACS, Leiden University, Leiden, The Netherlands}
\affil[3]{Honda Research Institute Europe GmbH, Offenbach, Germany}

\date{}
\begin{document}
\maketitle

\begin{abstract}
In practice, various learning scenarios provide access to auxiliary features exclusively during training. Incorporating such data to enhance model performance gave rise to a paradigm known as \textit{Learning Using Privileged Information} (LUPI).
While this extra information is intended to improve the resulting model, establishing a generalized, cohesive understanding of how privileged information (PI) transfers useful knowledge remains a challenge.
Vapnik's original theory and subsequent works offer performance guarantees in certain cases, but these results are inherently per-algorithm and rely on setting-specific proof approaches. Consequently, a more general framework explaining how and when PI transfers useful knowledge is still missing.
To bridge this gap, we introduce an algorithm-agnostic, information-theoretic approach based on the PAC-Bayes framework. Rather than asking whether a particular algorithm exploits PI, we ask how much value it could offer: comparing the tightest achievable risk bound with and without PI yields its \emph{potential} --- an upper limit on the extractable gain. We introduce a metric that quantifies this potential directly from empirical training risk, bypassing the need for test-time data access, and validate our findings in both supervised and unsupervised settings. The results demonstrate a robust correspondence between our training-time metric and true test-time performance gains. Ultimately, this work takes a necessary step toward an information-theoretic understanding of LUPI, and quantifying the potential of privileged features before committing to a model.
\end{abstract}

\section{Introduction}

When collecting data to train machine learning models, we often find ourselves in a position of informational asymmetry.
During training, we might have access to high-fidelity or expensive auxiliary features that are completely unavailable when the model is deployed in production.
For example, autonomous agents can be trained using ground-truth simulation parameters that real-world onboard sensors cannot observe at runtime \citep{chen2020learning, learning_by_cheating}.
Also, clinical medical diagnostics can utilize comprehensive laboratory biopsies to train an image classifier, yet the deployed model must perform rapid screening using only standard non-invasive diagnostic tools \citep{Gao2024}.
The Learning Under Privileged Information (LUPI) paradigm provides a formal framework for this setting, aiming to exploit these transient training-only features, termed Privileged Information (PI), to guide the training process toward a superior test-time predictor.
However, outside of a few specific domains, we rarely know whether models use this extra information efficiently, or if a given auxiliary feature will genuinely help the learning.

Since its introduction \citep{Vapnik2009}, all LUPI theoretical guarantees have been setting-specific.
Each proof heavily exploited the choice of a particular learning model
(e.g. Support Vector Machines \citep{JMLR:v16:vapnik15b, Vapnik2009} or custom regularized deep networks \citep{PI-DUAL, PI-LLM}) rather than the presence of PI itself.
Moreover, these frameworks almost entirely evaluate the utility of PI through the lens of computational or sample-complexity advantages, proving, for example, that PI can accelerate optimization convergence rates \citep{PI_deepNN, JMLR:v16:vapnik15b} or demonstrating computational separations when auxiliary features are derived from quantum hardware \citep{LUQPI}.

What is missing is an algorithm-agnostic account that isolates how the presence of PI reshapes the underlying learning problem from a pure, information-theoretic standpoint. Previous works conflate two questions: whether PI makes the learning problem easier, and whether a chosen deployment rule preserves that advantage when PI is unavailable at inference. We separate and answer both within the PAC-Bayes framework. Starting from Catoni's PAC bound~\citep{2007}, we introduce two idealized endpoint posteriors: the never-PI Gibbs measure (PI never available) and an oracle, or always-PI, measure on LUPI's hypothesis space (PI available in training and inference). Their true-risk gap quantifies the \emph{potential} of PI; since PI is absent at inference, a LUPI learner must push the oracle posterior through a data-independent deployment map $T$, which quantifies its \emph{realization}. We use Gibbs measures for both endpoints because each uniquely minimizes and simplifies its Catoni upper bound, giving the tightest certificate from which we can gauge the true-risk gap. This lets us define a utility indicator for PI --- the log-partition gap between the never-PI and oracle posteriors, which can be evaluated from training quantities as prior integrals without sampling either posterior. We further characterize when this potential can be realized through a deployment map $T$, providing a deployment-certificate criterion. Our contributions are:
\begin{itemize}
\item We develop an algorithm-agnostic PAC-Bayes formulation to certify the potential of PI, in which we propose to measure the potential with the log-partition function difference between two idealized Gibbs endpoint posteriors, independently of the learning algorithm. We also provide the condition at which the deployment map can realize the potential.
\item We extend the scope of our theoretical results from canonical Gibbs posteriors to an important subclass of exponential-family posteriors, covering exact Bayesian posteriors for linear, generalized-linear, and Gaussian-process models.
\item We validate the theoretical results in a Gaussian mixture model (GMM) with Bayesian estimation and Gaussian process classification (GPC) with privileged noise, where the log-partition function difference metric confidently tracks the realized test-time gain as PI quality varies.
\end{itemize}

\section{Background} \label{sec:background}
Let $\X$ and $\Y$ be topological spaces with their Borel
$\sigma$-algebras. Let $\DataSpace\coloneqq\X\times\Y$,
$\DataSpaceBorel\coloneqq\mathcal B(\X)\otimes\mathcal B(\Y)$, and denote by
$\DataDistri\in\mathcal P(\DataSpace)$ the
probability measures on $\Omega$.
The sample $S=((X_i,Y_i))_{i=1}^N\sim\DataDistri^{\otimes N}$ takes values in
$(\SampleSpace,\SampleSpaceBorel)$; write $s=((x_i,y_i))_{i=1}^N$ for a
realization. Let $(\Hypo,d_\Hypo)$ be a Polish space of Borel-measurable maps
$h\colon\X\to\R$, with Borel $\sigma$-algebra equal to the evaluation
$\sigma$-algebra
$\HypoBorel=\sigma(\{\operatorname{ev}_x^{-1}(A):x\in\X,\,
A\in\mathcal B(\R)\})$, where $\operatorname{ev}_x(h)=h(x)$. For any
$h_0\in\Hypo$, we consider $\mathcal M_1^+(\Hypo)=\{\rho\in\mathcal P(\Hypo):
\int d_\Hypo(h,h_0)\rho(\dif h)<\infty\}$. Finally, let $\ell\colon\R\times\Y\to[0,1]$ be Borel measurable and
assume $(h,x,y)\mapsto\ell(h(x),y)$ is jointly measurable.

\subsection{Learning Using Privileged Information} \label{sec:LUPI}
The Learning Under Privileged Information (LUPI) framework~\citep{Vapnik2009,JMLR:v16:vapnik15b} assumes that, during training, the learner observes an additional PI variable $x^*$ taking values in a PI space $\mathcal{X}^*$. This information is unavailable at
inference, so the final predictor must still be a function $h\colon \X\to\R$ of the ordinary input alone.
For a topological PI space $\X^*$, we set
$\DataSpace^* \coloneqq \X\times\X^*\times\Y$ with product Borel $\sigma$-algebra
$\mathcal B(\X)\otimes\mathcal B(\X^*)\otimes\mathcal B(\Y)$, and let
$\DataDistri^*$ be a probability measure whose $(X,Y)$-marginal is
$\DataDistri$. Then $S^*=((X_i,X_i^*,Y_i))_{i=1}^N\sim
(\DataDistri^*)^{\otimes N}$.

PI can reveal latent structure, example difficulty, or measurement quality, and thereby guide training towards a better predictor; it changes how $h$ is learned,
but not what information $h$ may use after deployment
(see~\cref{sec:deployment_mape}). The canonical example is SVM+~\citep{Vapnik2009,JMLR:v16:vapnik15b}, in which the
slack variable $\xi_i$ of each training point $x_i$ is modelled as a \emph{correcting function} of the privileged variable $x_i^*$, i.e., $\xi_i=\langle w^*,\phi^*(x_i^*)\rangle+b^*$, where $\xi_i$ measures how difficult the $i$-th example is to classify --- the margin violation of a \emph{training} point --- a quantity that simply does not exist at prediction time, so the correcting function has no role to play once training finishes (also see~\cref{sec:nontrivial-deployment-examples}).

Throughout, we treat PI as an intrinsic property of the data rather than of the learner.

\begin{assumption}[PI is a property of the data]\label{ass:pi-data}
The privileged variable is generated independently of the model: for any hypothesis/weights $w$,
$$
P(x^*\mid x, w) = P(x^*\mid x).
$$
Equivalently, the conditional law of $X^*$ given $X$ does not depend on the learner. This makes the conditional kernel $x\mapsto P(X^*\in\cdot\mid X=x)$ a fixed, data-independent object that we may use when constructing the deployment map (\cref{sec:deployment_mape}).
\end{assumption}

Subsequent work has extended LUPI beyond explicit slack correction.
MIML-FCN+~\citep{YangZC+17} introduced a deep analogue of SVM+, using a
privileged network to predict the loss of the ordinary-input network.
Generalized distillation~\citep{LopezPaz2016} instead trains a teacher on
privileged inputs and transfers its soft predictions to a student defined on
$\X$.
A complementary line uses PI to model training labels' uncertainty:
GPC+ represents privileged information through input-dependent label
noise~\citep{GPC+}, while heteroscedastic dropout~\citep{LambertSS18} makes the variance of Gaussian dropout a learned function of $x^*$. More recently, TRAM~\citep{CollierJK+22} transfers PI through a shared representation with privileged and ordinary prediction heads, training the ordinary head to approximate the prediction obtained after marginalizing the unavailable information.

\subsection{PAC-Bayes Framework} \label{sec:PAC-Bayes-basics}
A \emph{prior} $\pi\in\mathcal{M}_1^+(\Hypo)$ is fixed independently of the training samples $S$. A randomized learning rule is a Markov kernel
$
\learner \colon\SampleSpace \times \HypoBorel \to [0,1],
$
such that for every $s\in\SampleSpace$, the map $B\mapsto\learner(s,B)$ is a probability measure on $(\Hypo,\HypoBorel)$, and for every $B\in \HypoBorel$, the map $s\mapsto\learner(s,B)$ is $\SampleSpaceBorel$-measurable.
For random sample $S$, $\rho_S\coloneqq\learner(S,\cdot)$ is a random measure on $(\Hypo, \HypoBorel)$.

We define the \emph{true risk} and the \emph{empirical risk}:
\begin{align}
    R(h)   &= \mathbb{E}_{(X,Y) \sim \mathcal{D}}\!\left[\ell(h(X), Y)\right],
             & \text{(true risk)} \\
    r_S(h) &= \frac{1}{N}\sum_{i=1}^N \ell(h(x_i), y_i).
           & \text{(empirical risk)}
\end{align}
Thus $r_S$ is a random functional and $r_s$ is the corresponding realization. We write $r$ when the sample/realization is clear from the context. For
integrable $f$, denote by $\langle f\rangle_\rho=\int f(h)\rho(\dif h)$ the average function value. In particular, $\langle R\rangle_\rho = \mathbb{E}_{h \sim \rho}[R(h)]$ and $\langle r\rangle_\rho = \mathbb{E}_{h \sim \rho}[r(h)]$ are the true
and empirical risks of $h\sim\rho$. When $\rho=\rho_S$, both are random because the posterior depends on the sample. Catoni's theorem controls this sample-dependence.

\begin{theorem}[\citealp{2007}]
\label{thm:catoni}
Let $\pi\in\mathcal{P}(\Hypo)$ be independent of the sample. For any fixed
inverse-temperature $\lambda>0$ and error rate $\varepsilon\in(0,1)$, with probability at least $1-\varepsilon$ over $S\sim\DataDistri^{\otimes N}$, simultaneously for every $\rho\in\mathcal{P}(\Hypo)$ satisfying $\rho\ll\pi$,
\begin{equation}
\label{eq:catoni}
    \langle R \rangle_\rho \leq \Phi^{-1}_{\lambda/N}\!\left(
        \langle r \rangle_\rho
        + \frac{\mathrm{KL}(\rho \,\|\, \pi) + \log \varepsilon^{-1}}{\lambda}
    \right),
\end{equation}
where $\Phi^{-1}_{a}(x)=(1-\exp(-a\,x))/(1-\exp(-a))$ is the inverse Catoni
function and KL the Kullback--Leibler divergence.
Also, the bound applies to every posterior selected by a Markov
kernel, including $\rho_S=\learner(S,\cdot)$.
\end{theorem}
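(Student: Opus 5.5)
The plan is to prove Catoni's bound through the usual three ingredients: an exponential moment bound for a single fixed hypothesis, an exchange of expectations using the prior $\pi$, and the Donsker--Varadhan variational formula to pass to every posterior $\rho\ll\pi$ at once. Throughout, set $a=\lambda/N$ and write $\Phi_a(p)=-a^{-1}\log\bigl(1-p(1-e^{-a})\bigr)$. A direct computation shows that this is the inverse of the function $\Phi^{-1}_a$ in \cref{eq:catoni}, and that $\Phi_a$ is increasing and convex on $[0,1]$.

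\textbf{Step 1 (single hypothesis).} Fix $h$. The losses $\ell(h(X_i),Y_i)\in[0,1]$ are i.i.d.\ with mean $R(h)$. Convexity of $u\mapsto e^{-au}$ on $[0,1]$ gives $e^{-au}\le 1-u(1-e^{-a})$. Taking expectations,
\[
\E\, e^{-a\ell}\le 1-R(h)(1-e^{-a})=e^{-a\Phi_a(R(h))}.
\]
Multiplying over the $N$ independent terms yields
\[
\E_S \exp\bigl(\lambda[\Phi_a(R(h))-r_S(h)]\bigr)\le 1 .
\]

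\textbf{Step 2 (prior averaging and Markov).} Because $\pi$ does not depend on $S$, Tonelli's theorem applies; joint measurability of $(h,x,y)\mapsto\ell(h(x),y)$ makes the integrand measurable. Hence
\[
\E_S\int \exp\bigl(\lambda[\Phi_a(R)-r_S]\bigr)\,\dif\pi\le 1 .
\]
Markov's inequality then shows that, with probability at least $1-\varepsilon$,
\[
\int \exp\bigl(\lambda[\Phi_a(R)-r_S]\bigr)\,\dif\pi\le \varepsilon^{-1}.
\]

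\textbf{Step 3 (change of measure).} On this single event, apply Donsker--Varadhan. For every $\rho\ll\pi$,
\[
\lambda\langle \Phi_a(R)-r_S\rangle_\rho\le \KL(\rho\|\pi)+\log\int e^{\lambda[\Phi_a(R)-r_S]}\dif\pi .
\]
The event was fixed before $\rho$ was chosen, so the resulting bound holds simultaneously for all $\rho$. By Jensen's inequality and convexity of $\Phi_a$, $\Phi_a(\langle R\rangle_\rho)\le\langle\Phi_a(R)\rangle_\rho$. Combining the two inequalities gives
\[
\Phi_a(\langle R\rangle_\rho)\le \langle r\rangle_\rho+\frac{\KL(\rho\|\pi)+\log\varepsilon^{-1}}{\lambda}.
\]
Finally, apply the increasing map $\Phi^{-1}_a$ to both sides. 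Its domain issue is harmless: if the right-hand side is infinite, the bound is trivial. The statement for $\rho_S=\learner(S,\cdot)$ follows because the bound is uniform over $\rho$ on the good event; measurability of the kernel ensures the relevant quantities are random variables.

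\textbf{Main obstacle.} The delicate points are measure-theoretic rather than analytic. One must justify the Tonelli exchange and the measurability of $S\mapsto\int e^{\lambda[\Phi_a(R)-r_S]}\dif\pi$. One must also check that Donsker--Varadhan applies when the exponential moment is merely finite, using the fact that $\Phi_a(R)-r_S$ is bounded by $\Phi_a(1)$, so the integral is finite.
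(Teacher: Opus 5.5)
Your proof is correct and is the standard proof of Catoni's bound: the Bernoulli-domination moment bound $\mathbb{E}_S e^{-\lambda r_S(h)}\le e^{-\lambda\Phi_{\lambda/N}(R(h))}$, prior averaging with Markov's inequality on a single event, Donsker--Varadhan for all $\rho\ll\pi$ at once, then Jensen for the convex $\Phi_{\lambda/N}$ and inversion by the increasing $\Phi^{-1}_{\lambda/N}$. The paper does not prove this theorem itself; it imports the result from Catoni (2007), whose argument follows the same route.
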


For $\lambda>0$, the \emph{Gibbs learning rule} returns
the posterior $\rho_{\lambda}$ defined by, for a realization $s$,
\begin{equation}\label{eq:gibbs}
\frac{\dif\rho_{\lambda}}{\dif\pi}(h) = \frac{e^{-\lambda r_s(h)}}{Z_\lambda(s)},\quad Z_\lambda(s)=\int e^{-\lambda r_s(h)}\pi(\dif h).
\end{equation}
Joint measurability makes the Gibbs learner $\learner_\lambda(s,B) \coloneqq \rho_{\lambda}(B)$, $B\in\HypoBorel$, a
Markov kernel. Importantly, $\rho_\lambda$ uniquely minimizes the
right-hand side of~\cref{eq:catoni}.

\begin{lemma}[Gibbs posterior minimizes the Catoni bound]
\label{lem:gibbs_min}
Fix $\lambda > 0$ and $\varepsilon \in (0,1)$, and define the Catoni certificate
\begin{equation}
    B_\lambda(\rho) \;\coloneqq\; \Phi^{-1}_{\lambda/N}\!\left(
        \langle r \rangle_\rho
        + \frac{\mathrm{KL}(\rho \,\|\, \pi) + \log \tfrac1\varepsilon}{\lambda}
    \right).
\end{equation}
Then, among all posteriors $\rho \ll \pi$, the Gibbs posterior $\rho_\lambda$
of \cref{eq:gibbs} is the unique minimizer of $B_\lambda$, and $B_\lambda(\rho_\lambda)=\Phi^{-1}_{\lambda/N}\!\left(\frac{-\log Z_\lambda + \log \varepsilon^{-1}}{\lambda}\right)$.
\end{lemma}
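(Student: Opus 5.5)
The plan is to reduce the minimization of $B_\lambda$ to the minimization of the argument of $\Phi^{-1}_{\lambda/N}$, and then use the Donsker--Varadhan (Gibbs) variational identity. First, $\Phi^{-1}_a(x)=(1-e^{-ax})/(1-e^{-a})$ with $a=\lambda/N>0$ is strictly increasing in $x$, because $1-e^{-a}>0$ and $x\mapsto -e^{-ax}$ is strictly increasing. Hence $\rho$ minimizes $B_\lambda$ (uniquely) if and only if it minimizes (uniquely)
\[
F(\rho)\coloneqq\langle r\rangle_\rho+\frac{\mathrm{KL}(\rho\,\|\,\pi)}{\lambda}.
\]
The constant $\log\varepsilon^{-1}/\lambda$ plays no role in the minimization.

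Next, for any $\rho\ll\pi$ with $\mathrm{KL}(\rho\,\|\,\pi)<\infty$, I would rewrite $\lambda F(\rho)$ in terms of the Gibbs posterior. Since $r_s\in[0,1]$, we have $e^{-\lambda}\le Z_\lambda\le 1$. So $\rho_\lambda$ is a well-defined probability measure equivalent to $\pi$, with bounded log-density $\log\frac{\dif\rho_\lambda}{\dif\pi}=-\lambda r-\log Z_\lambda$. Consequently $\rho\ll\rho_\lambda$, and the chain rule for densities gives
\[
\mathrm{KL}(\rho\,\|\,\pi)=\mathrm{KL}(\rho\,\|\,\rho_\lambda)+\int\log\frac{\dif\rho_\lambda}{\dif\pi}\,\dif\rho=\mathrm{KL}(\rho\,\|\,\rho_\lambda)-\lambda\langle r\rangle_\rho-\log Z_\lambda .
\]
This yields the identity $\lambda F(\rho)=\mathrm{KL}(\rho\,\|\,\rho_\lambda)-\log Z_\lambda$. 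If instead $\mathrm{KL}(\rho\,\|\,\pi)=\infty$, then $F(\rho)=\infty$ while $F(\rho_\lambda)$ is finite, so such $\rho$ are not minimizers.

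From the identity, Gibbs' inequality gives $\mathrm{KL}(\rho\,\|\,\rho_\lambda)\ge 0$, with equality iff $\rho=\rho_\lambda$. Hence $\rho_\lambda$ is the unique minimizer of $F$, and therefore of $B_\lambda$, with $F(\rho_\lambda)=-\log Z_\lambda/\lambda$. Substituting into $B_\lambda$ gives the stated value $\Phi^{-1}_{\lambda/N}\bigl((-\log Z_\lambda+\log\varepsilon^{-1})/\lambda\bigr)$.

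The main obstacle is the measure-theoretic justification of the KL decomposition: splitting $\int\log\frac{\dif\rho}{\dif\pi}\dif\rho$ into $\int\log\frac{\dif\rho}{\dif\rho_\lambda}\dif\rho+\int\log\frac{\dif\rho_\lambda}{\dif\pi}\dif\rho$ requires ruling out $\infty-\infty$. This is handled by the boundedness of $\log\frac{\dif\rho_\lambda}{\dif\pi}$, which follows from $\ell\in[0,1]$. The same boundedness guarantees mutual absolute continuity of $\pi$ and $\rho_\lambda$, so that uniqueness holds as equality of measures.
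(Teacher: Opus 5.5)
Your proof is correct and follows the same route as the paper. Both use the strict monotonicity of $\Phi^{-1}_{\lambda/N}$ to reduce to minimizing $\lambda\langle r\rangle_\rho+\mathrm{KL}(\rho\,\|\,\pi)$, rewrite this as $\mathrm{KL}(\rho\,\|\,\rho_\lambda)-\log Z_\lambda$, and conclude by Gibbs' inequality. Your extra details fill in steps the paper leaves implicit: the bound $e^{-\lambda}\le Z_\lambda\le 1$ makes $\log(\dif\rho_\lambda/\dif\pi)$ bounded, which gives $\rho\ll\rho_\lambda$ and rules out $\infty-\infty$ in the KL split, and you treat the case $\mathrm{KL}(\rho\,\|\,\pi)=\infty$ separately.
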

\begin{proof}
See~\cref{proof:lem:gibbs_min}.
\end{proof}

\begin{remark}[Gibbs identities]\label{rem:gibbs-identities}
Two identities used repeatedly below follow directly from $\dif\rho_\lambda/\dif\pi=e^{-\lambda r}/Z_\lambda$:
\begin{equation}\label{eq:gibbs-identity}
\KL(\rho_\lambda\Vert\pi) = -\lambda\langle r\rangle_{\rho_\lambda}-\log Z_\lambda,
\quad
\lambda\langle r\rangle_{\rho_\lambda}+\KL(\rho_\lambda\Vert\pi) = -\log Z_\lambda.
\end{equation}
The second identity is what makes the log-partition function a computable stand-in for the (empirical-risk $+$ KL) budget of a Gibbs posterior.
\end{remark}

\section{PAC-Bayes Analysis of Privileged Information}
\label{sec:theory}
Our analysis follows a potential--realization--transfer storyline. We first introduce two idealized endpoint Gibbs measures: the always-PI endpoint, which retains privileged information at prediction time, and the never-PI endpoint, which never observes it; their separation quantifies the potential of PI. Since PI is unavailable at inference, we then deploy the always-PI measure to the hypothesis space and ask how much of this potential can be realized. Finally, we transfer the theory from idealized Gibbs measures to a broad fixed-feature class of posteriors via a data-independent change of prior. \Cref{thm:main,thm:true_risk_separation} address the two levels of the argument: comparison of PAC-Bayes certificates and separation of the true endpoint risks, respectively.

\subsection{Basic assumptions}\label{sec:deployment_mape}

Let $(\Hypo^*,d_{\Hypo^*})$ be a complete separable metric space of
Borel-measurable PI hypotheses $h^*\colon\X\times\X^*\longrightarrow\R$. We define the Borel $\sigma$-algebra $\HypoBorel^*$ on $\Hypo^*$ and collection of measures $\mathcal M_1^+(\Hypo^*)$ in the same way as $(\Hypo, \HypoBorel)$. For a realization $s^*=((x_i,x_i^*,y_i))_{i=1}^N$,
we define
\begin{align}
R^*(h^*)&\coloneqq \E_{(X,X^*,Y)\sim\DataDistri^*}\!\left[\ell\bigl(h^*(X,X^*),Y\bigr)\right],\label{eq:pi-risk}\\
r_{s^*}^*(h^*)&\coloneqq \frac1N\sum_{i=1}^N\ell\bigl(h^*(x_i,x_i^*),y_i\bigr). \label{eq:pi-empirical-risk}
\end{align}
Accordingly, $r_{S^*}^*$ is a random functional and $r_{s^*}^*$ is its realization. For simplicity, we write $r^*$, and likewise $r$, whenever the relevant sample is unambiguous.

Since $x^*$ is unavailable at test time, a learned hypothesis in
$\mathcal{H}^*$ cannot be deployed directly. We therefore assume a Borel-measurable \emph{deployment map}
\begin{equation}
\label{eq:deploy}
    T\colon \Hypo^* \to \Hypo, \quad h^* \mapsto h,
\end{equation}
which is part of the learning architecture and independent of the training sample. For each $\rho^*\in\mathcal{M}_1^+(\Hypo^*)$, we denote by $[T_\sharp\rho^*](A)\coloneqq\rho^*(T^{-1}(A))$, $A\in\HypoBorel$, the pushforward measure. We also assume the linear-growth condition $d_{\Hypo}(T(h^*),h_0)
\leq C_T\bigl(1+d_{\Hypo^*}(h^*,h_0^*)\bigr)$, $h_0 = T(h_0^*)$, for some $C_T>0$ (e.g., $T$ is Lipschitz-continuous), which guarantees that $T_\sharp\rho^*\in\mathcal{M}_1^+(\Hypo)$ whenever $\rho^*\in\mathcal{M}_1^+(\Hypo^*)$.
Let $\pi^*\in\mathcal M_1^+(\Hypo^*)$ be a data-independent prior. To compare
the PI and non-PI constructions under the same prior belief, we take $\pi\coloneqq T_\sharp\pi^*$; since $T$ and $\pi^*$ are data-independent, $\pi$ is a valid PAC-Bayes prior for the Catoni bound in~\cref{thm:catoni}.

\begin{example}[Marginalization construction for $T$]\label{ex:marginalization}
Assume $\X,\X^*$ are standard Borel and let $\mu\colon \X\times\HypoBorel^*\to[0,1]$ be a Markov kernel. Denote by $\mu_x \coloneqq \mu(x, \cdot)$ a measure on $(\Hypo^*, \HypoBorel^*)$. A canonical choice is a regular conditional probability distribution of $X^*$ given $X=x$, i.e., for each $B\in\HypoBorel^*$, $\mu_x(B) = \Pr(X^*\in B \mid X = x)$; under \cref{ass:pi-data} this kernel is independent of the learner. We define $T\colon\Hypo^*\to\Hypo$ by
$$
[Th^*](x)=\int_{\Hypo^*} h^*(x,x^*)\mu_x(\dif x^*)\,.
$$
If this integral is absolutely finite for every $x$ and the result belongs to $\Hypo$, kernel integration
makes $(h^*,x)\mapsto(Th^*)(x)$ jointly measurable. Note that, we assume the conditional probability distribution $\mu_x$ is independent from the sample $S$ or $S^*$; otherwise, the pushforward of the prior $\pi\coloneqq T_\sharp\pi^*$ will be sample-dependent, which fails the requirements of the Catoni bound.
\end{example}

\subsection{Endpoint posteriors and the Gibbs envelope} \label{sec:envelope}
Fix the inverse temperature $\lambda$, a realization $s^*$ and its projection $s$. The always-PI and never-PI Gibbs posteriors are defined by
$$
\frac{\dif\rhoAlways}{\dif\pi^*}(h^*) \coloneqq\frac{e^{-\lambda r^*(h^*)}}{Z_\lambda^*(s^*)},\quad  \frac{\dif\rhoNever}{\dif\pi}(h) \coloneqq\frac{e^{-\lambda r(h)}}{Z_\lambda(s)},
$$
where $Z_\lambda^*(s^*) \coloneqq \E_{h^*\sim\pi^*}\exp(-\lambda r^*(h^*))$ and $Z_\lambda(s) \coloneqq \E_{h\sim\pi}\exp(-\lambda r(h))$. We refer to $\rhoAlways$ and $\rhoNever$ jointly as the \emph{endpoint
measures}. The never-PI endpoint is the ordinary baseline. The always-PI
endpoint is an oracle benchmark: it is allowed to retain $x^*$ at prediction
time and, for fixed $(\lambda,\pi^*)$, uniquely minimizes the PI Catoni
certificate over all $\rho^*\ll\pi^*$. The always-PI $\rhoAlways\in \mathcal{M}^+_1(\Hypo^*)$ and its true risk should be measured with functional $R^*$. In this precise PAC-Bayes sense, it is
the best Gibbs posterior available with PI. It is not, however, a deployable
LUPI posterior.

The deployment map $T$ pushes forward the oracle endpoint $\rhoAlways$ to $\Hypo$, resulting in the LUPI posterior $\rhoDeployed\in\Measures(\Hypo)$,
\begin{equation*}
\rhoDeployed \coloneqq T_\sharp\rhoAlways, \quad \langle R\rangle_{\rhoDeployed}=\E_{h^*\sim\rhoAlways}\!\bigl[\,R\bigl(T(h^*)\bigr)\,\bigr].
\end{equation*}
The endpoint posteriors depend only on their respective empirical risks and priors, whereas $\rhoDeployed$ also depends on how the deployment map $T$ transfers PI to $\Hypo$. Deployment can lose (part of) the PI. For instance, nothing in our construction forbids a degenerate choice of deployment, i.e., $T(h^*)\equiv h_0$ for a fixed $h_0$, which discards the hypothesis learned with PI entirely. We call the signed endpoint gap
\begin{equation} \label{eq:gibbs-envelope}
\mathcal{G}_\lambda \coloneqq \langle R\rangle_{\rhoNever} -\langle R^*\rangle_{\rhoAlways}
\end{equation}
the \emph{Gibbs envelope}. A positive $\mathcal{G}_\lambda$ means that PI opens
a genuine gap between the two idealized endpoints. If, in addition, deployment
places $\rhoDeployed$ between them, i.e.,
\begin{equation}
\label{eq:envelope}
    \underbrace{\langle R^*\rangle_{\rhoAlways}}_{\text{best: keep }x^*}
    \le
    \underbrace{\langle R\rangle_{\rhoDeployed}}_{\text{LUPI: deploy via }T}
    \le
    \underbrace{\langle R\rangle_{\rhoNever}}_{\text{baseline: never use }x^*}
\end{equation}
then $\mathcal{G}_\lambda$ upper-bounds the realized deployment gain, and a wide envelope is \emph{necessary for LUPI to be beneficial but not sufficient}. We do not
assume~\cref{eq:envelope} to always hold: the endpoint separation and the deployment gain are
certified separately in the next subsection.
This distinction separates two questions: (1) \textbf{Potential}, whether PI creates a positive Gibbs envelope, and
(2) \textbf{Realization}, whether the chosen $T$ converts that potential into an LUPI posterior better than the baseline $\rhoNever$.

\subsection{Partition-function criteria for PI utility}
\label{sec:metric}
To answer the \textbf{Potential} question, we first ask whether PI improves the optimal \emph{upper certificate} at the oracle endpoint. Let $B_\lambda^*(\rho^*)$ denote the PI analogue of
$B_\lambda(\rho)$ in~\cref{lem:gibbs_min}, formed with $(r^*,\pi^*)$. We use
the abbreviations $B_\lambda^*\coloneqq B_\lambda^*(\rhoAlways)$ and $B_\lambda\coloneqq B_\lambda(\rhoNever)$. For the observed pair $(s^*,s)$, we define the log-partition gap
\begin{equation} \label{delta_z}
\Delta Z(s^*)\coloneqq \log Z_\lambda^*(s^*)-\log Z_\lambda(s).
\end{equation}
Throughout, $\log$ denotes the natural logarithm. Because the endpoint posteriors minimize their respective empirical PAC-Bayes
objectives, this gap can be evaluated from prior integrals before either
posterior is normalized. The next theorem shows that the sign of $\Delta Z$ orders the two endpoint upper certificates.

For a Gibbs measure, the KL term and empirical risk combine into the log-partition function (\cref{rem:gibbs-identities}), an identity that makes the comparison both exact and computable; its extension beyond the canonical Gibbs form is discussed in
\cref{sec:expfam}.

\begin{theorem}[Criterion for the tighter endpoint certificate]
\label{thm:main}
Fix $\lambda>0$ and $\varepsilon\in(0,1/2)$. With probability at least
$1-2\varepsilon$ over $S^*\sim(\DataDistri^*)^{\otimes N}$ and its projection
$S$, both endpoints' upper certificates are valid:
\begin{equation} \label{eq:endpoint-upper-certificates}
\langle R^*\rangle_{\rhoAlways}\le B_\lambda^*, \quad \langle R\rangle_{\rhoNever}\le B_\lambda.
\end{equation}
Conditionally on the observed sample, the following equivalences hold:
\begin{equation} \label{eq:main-equiv}
B^*_\lambda < B_\lambda\Longleftrightarrow \Delta_r > \frac{\Delta_{\mathrm{KL}}}{\lambda} \Longleftrightarrow \Delta Z > 0,
\end{equation}
where $\Delta_{\mathrm{KL}} = \mathrm{KL}(\rhoAlways \| \pi^*) - \mathrm{KL}(\rhoNever\| \pi)$ and $\Delta_r = \langle r \rangle_{\rhoNever} - \langle r^* \rangle_{\rhoAlways}$.
\end{theorem}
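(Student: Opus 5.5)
The plan has two parts: first the validity of the certificates (a probability statement), then the equivalences (a deterministic statement about a fixed sample).

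\emph{Validity.} We apply \cref{thm:catoni} twice with error rate $\varepsilon$ each.

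For the PI side, we use the sample $S^*\sim(\DataDistri^*)^{\otimes N}$, the data-independent prior $\pi^*$, and the loss $(h^*,(x,x^*,y))\mapsto\ell(h^*(x,x^*),y)$, which takes values in $[0,1]$ and is jointly measurable. Since $\rhoAlways\ll\pi^*$ by construction, this gives $\langle R^*\rangle_{\rhoAlways}\le B_\lambda^*$ outside an event of probability at most $\varepsilon$.

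For the non-PI side, we use the projection $S$, which is distributed as $\DataDistri^{\otimes N}$ because the $(X,Y)$-marginal of $\DataDistri^*$ is $\DataDistri$. The prior $\pi=T_\sharp\pi^*$ is data-independent because $T$ and $\pi^*$ are. This gives $\langle R\rangle_{\rhoNever}\le B_\lambda$ outside another event of probability at most $\varepsilon$.

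Both failure events live on the same probability space of $S^*$, so a union bound yields probability at least $1-2\varepsilon$. We need $\varepsilon<1/2$ only so that this is a nontrivial statement. We do not need any independence between the two events.

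\emph{Equivalences.} We fix the sample.

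First we record that $\Phi^{-1}_a(x)=(1-e^{-ax})/(1-e^{-a})$ is strictly increasing in $x$ for $a=\lambda/N>0$. Both certificates use the same $a$ and the same additive $\log\varepsilon^{-1}/\lambda$. Hence
\[
B^*_\lambda<B_\lambda\iff \langle r^*\rangle_{\rhoAlways}+\frac{\KL(\rhoAlways\Vert\pi^*)}{\lambda}<\langle r\rangle_{\rhoNever}+\frac{\KL(\rhoNever\Vert\pi)}{\lambda}.
\]
Rearranging gives exactly $\Delta_r>\Delta_{\mathrm{KL}}/\lambda$, which is the first equivalence.

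For the second equivalence, we invoke the second identity of \cref{rem:gibbs-identities} for each endpoint. For the PI endpoint it reads $\lambda\langle r^*\rangle_{\rhoAlways}+\KL(\rhoAlways\Vert\pi^*)=-\log Z^*_\lambda$, and the analogous identity holds for $\rhoNever$. Multiplying the inequality by $\lambda>0$ therefore turns it into $-\log Z^*_\lambda<-\log Z_\lambda$, that is, $\Delta Z>0$. Alternatively, the closed form $B_\lambda(\rho_\lambda)$ from \cref{lem:gibbs_min} together with monotonicity of $\Phi^{-1}$ gives $B^*_\lambda<B_\lambda\iff\Delta Z>0$ directly.

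\emph{Main obstacle.} The one delicate step is making sure every quantity is finite, so that the rearrangements are legitimate. Since $\ell\in[0,1]$, we have $e^{-\lambda}\le Z_\lambda,Z^*_\lambda\le1$, so both log-partition functions are finite. The Gibbs identities then show that both KL terms are finite, bounded by $\lambda$. The empirical risks lie in $[0,1]$. With everything finite, no $\infty-\infty$ issues arise, and the argument closes.
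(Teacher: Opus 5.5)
Your proposal is correct and follows essentially the same route as the paper: two applications of Catoni's bound combined by a union bound, then strict monotonicity of $\Phi^{-1}_{\lambda/N}$ to cancel the $\log\varepsilon^{-1}/\lambda$ terms, and finally the Gibbs identities to collapse $\Delta_r-\Delta_{\mathrm{KL}}/\lambda$ into $\Delta Z/\lambda$. Your finiteness check, namely $e^{-\lambda}\le Z_\lambda,Z_\lambda^*\le 1$ and hence both KL terms bounded by $\lambda$, is a small but worthwhile addition that the paper leaves implicit.
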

\begin{proof}
See~\cref{proof:thm:main}.
\end{proof}

\paragraph{Proof idea.} Both certificates come from Catoni's bound plus a union bound. Since $\Phi^{-1}_{\lambda/N}$ is strictly increasing, $B^*_\lambda<B_\lambda$ is equivalent to an ordering of their arguments, in which the confidence terms $\log\varepsilon^{-1}/\lambda$ cancel and leave $\Delta_r>\Delta_{\mathrm{KL}}/\lambda$. Substituting the Gibbs identities~\cref{eq:gibbs-identity} collapses this quantity exactly to $\Delta Z/\lambda$, so the whole condition reduces to $\Delta Z>0$. Intuitively, $\Delta_r$ measures how much PI lowers the fitted (empirical) risk, while $\Delta_{\mathrm{KL}}/\lambda$ is the extra model-complexity price paid on the richer PI space; PI helps the certificate exactly when the former outweighs the latter.

\begin{remark}
$\Delta Z$ is, therefore, a pre-training \emph{criterion}, rather than a risk
certificate itself. It can be estimated from the empirical risks and prior
integrals without fitting a normalized endpoint posterior. A positive value
certifies that the always-PI endpoint has the tighter upper certificate; it
does not yet imply that the Gibbs envelope in~\cref{eq:gibbs-envelope} is positive.
\end{remark}

\Cref{thm:main} isolates the potential of PI visible at the oracle endpoint.
To study the \textbf{Realization} question, we must account for what changes when that endpoint measure is transported through $T$.
We define the empirical deployment gap and the
relative-entropy contraction by
\begin{align}
D_T(s^*) &\coloneqq \langle r\circ T-r^*\rangle_{\rhoAlways} =\langle r\rangle_{\rhoDeployed}-\langle r^*\rangle_{\rhoAlways},\label{eq:deployment-gap} \\
\kappa_T(s^*) &\coloneqq \KL(\rhoAlways\Vert\pi^*) -\KL(\rhoDeployed\Vert\pi)\,. \label{eq:kl-contraction}
\end{align}
Here $(r\circ T)(h^*)=r(T(h^*))$. We present an upper certificate for the LUPI true risk as follows.

\begin{corollary}[Deployment-certificate criterion] \label{corollary:deployment-certificate}
For any fixed $\lambda>0$ and $\varepsilon\in(0,1)$, with probability at least
$1-\varepsilon$ over $S^*\sim(\DataDistri^*)^{\otimes N}$,
\begin{equation} \label{eq:deployed-risk-certificates}
\langle R\rangle_{\rhoDeployed}\leq B_\lambda(\rhoDeployed), \quad \langle R\rangle_{\rhoNever}\leq B_\lambda(\rhoNever).
\end{equation}
Conditionally on the observed sample $s^*$, we have
\begin{equation*}
B_\lambda(\rhoDeployed)<B_\lambda(\rhoNever) \Longleftrightarrow \Delta Z(s^*)>\lambda D_T(s^*)-\kappa_T(s^*).
\end{equation*}
Moreover, $\kappa_T(s^*)\geq 0$. Hence, the posterior-free condition
\begin{equation} \label{eq:sufficient-deployment}
\Delta Z(s^*) > \lambda D_T(s^*)
\end{equation}
is sufficient for the LUPI upper certificate to be tighter than the
never-PI certificate. If $\kappa_T(s^*)=0$, it is also necessary.
\end{corollary}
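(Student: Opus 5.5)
The plan is to treat the three claims separately: validity of the two certificates, the algebraic equivalence between the certificates, and the sign of $\kappa_T$.

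\textbf{Validity.} I would apply \cref{thm:catoni} once, with the data-independent prior $\pi=T_\sharp\pi^*$ on $\Hypo$ and the projected sample $S$. Since the bound holds simultaneously for all $\rho\ll\pi$, a single event of probability at least $1-\varepsilon$ covers both posteriors, so no union bound is needed. Two facts remain to be checked. First, $\rhoNever\ll\pi$ holds by construction. Second, $\rhoDeployed\ll\pi$: if $\pi(A)=0$, then $\pi^*(T^{-1}A)=0$, hence $\rhoAlways(T^{-1}A)=0$ because $\rhoAlways\ll\pi^*$. Both posteriors are selected measurably from the sample: $\rhoAlways$ is a Gibbs kernel and $T$ is fixed and measurable. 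They also lie in $\Measures(\Hypo)$ by the linear-growth condition on $T$. Note that $\rhoDeployed$ depends on $S^*$ and not only on $S$. This is harmless, because the simultaneous-in-$\rho$ form of \cref{thm:catoni} controls any posterior, and the empirical risk $r$ is evaluated on the projection $S$ of $S^*$.

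\textbf{Equivalence.} Since $\Phi^{-1}_{\lambda/N}$ is strictly increasing and the confidence terms cancel, $B_\lambda(\rhoDeployed)<B_\lambda(\rhoNever)$ holds if and only if
\[
\lambda\langle r\rangle_{\rhoDeployed}+\KL(\rhoDeployed\Vert\pi)<\lambda\langle r\rangle_{\rhoNever}+\KL(\rhoNever\Vert\pi)=-\log Z_\lambda .
\]
The last equality is \cref{eq:gibbs-identity}. I would then rewrite the left-hand side using the definitions:
\[
\langle r\rangle_{\rhoDeployed}=\langle r^*\rangle_{\rhoAlways}+D_T,\qquad \KL(\rhoDeployed\Vert\pi)=\KL(\rhoAlways\Vert\pi^*)-\kappa_T .
\]
Applying \cref{eq:gibbs-identity} to the PI endpoint gives $\lambda\langle r^*\rangle_{\rhoAlways}+\KL(\rhoAlways\Vert\pi^*)=-\log Z^*_\lambda$. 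So the left-hand side equals $-\log Z^*_\lambda+\lambda D_T-\kappa_T$, and the inequality becomes $\Delta Z>\lambda D_T-\kappa_T$.

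\textbf{Sign of $\kappa_T$.} The claim $\kappa_T\ge0$ is the data-processing inequality for KL under the measurable map $T$: $\KL(T_\sharp\rhoAlways\Vert T_\sharp\pi^*)\le\KL(\rhoAlways\Vert\pi^*)$, with $\pi=T_\sharp\pi^*$. This is the step that needs care. I would cite it in its general measurable-space form, for instance via the Donsker--Varadhan variational formula, where test functions $g\circ T$ on $\Hypo^*$ form a subclass. I would also verify that both KL terms are finite, so that $\kappa_T$ is well defined. Finiteness follows because $r^*\in[0,1]$, which gives $\KL(\rhoAlways\Vert\pi^*)\le\lambda$ through the Gibbs identity and $\log Z^*_\lambda\ge-\lambda$.

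\textbf{Conclusion.} Sufficiency follows because $\Delta Z>\lambda D_T\ge\lambda D_T-\kappa_T$. If $\kappa_T=0$, the exact criterion coincides with \cref{eq:sufficient-deployment}, which gives necessity.
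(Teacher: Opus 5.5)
Your proposal is correct and follows the paper's proof essentially step for step: a single Catoni event for the $x$-only loss covers both $\rhoDeployed$ and $\rhoNever$ without a union bound; the confidence terms cancel and the Gibbs identities combine with the definitions of $D_T$ and $\kappa_T$ to give the exact criterion; and the data-processing inequality gives $\kappa_T\ge 0$. The differences are minor. You derive the data-processing inequality from the Donsker--Varadhan formula restricted to test functions $g\circ T$, whereas the paper uses conditional Jensen with $\frac{\dif T_\sharp\rhoAlways}{\dif T_\sharp\pi^*}\circ T=\E[\frac{\dif\rhoAlways}{\dif\pi^*}\mid\sigma(T)]$. You also add the finiteness check $\KL(\rhoAlways\Vert\pi^*)\le\lambda$, which makes $\kappa_T$ well defined and which the paper leaves implicit.
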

\begin{proof}
See~\cref{proof:corollary:deployment-certificate}.
\end{proof}

\paragraph{Proof idea.} Applying Catoni on the augmented sample space to the $x$-only loss $\ell(h(x),y)$ covers both $\rhoDeployed$ and $\rhoNever$ simultaneously, so no union bound is needed. Comparing the two certificate arguments and inserting the Gibbs identities together with the definitions of $D_T$ and $\kappa_T$ gives the stated equivalence. The sign $\kappa_T\ge0$ is the data-processing inequality for KL under the pushforward $T$: transporting both $\rhoAlways$ and $\pi^*$ through the same $T$ can only contract their divergence.

\begin{remark}
The sufficient criterion~\cref{eq:sufficient-deployment} does not require a
separate posterior-fitting procedure. We have
$
D_T(s^*) = Z_\lambda^*(s^*)^{-1}\int_{\Hypo^*}[r(T(h^*))-r^*(h^*)] \exp(-\lambda r^*(h^*))\pi^*(\dif h^*).
$
Thus $\Delta Z$ and $D_T$ can be estimated from samples $h^*\sim\pi^*$ whenever $T(h^*)$, $r(T(h^*))$, and $r^*(h^*)$ are accessible.
\end{remark}

Both~\cref{thm:main,corollary:deployment-certificate} compare upper
certificates, not the true risks themselves. In particular,
$B_\lambda(\rhoDeployed)<B_\lambda(\rhoNever)$ does not imply
$\langle R\rangle_{\rhoDeployed}<\langle R\rangle_{\rhoNever}$. A true-risk
separation requires an upper certificate for one posterior and a lower
certificate for the other. The latter is obtained by applying Catoni's bound
to the complementary loss.

\begin{lemma}[Catoni lower certificate] \label{lem:lower_cert}
Let $\pi$ be a data-independent prior over $\mathcal{H}$, $\ell$ a bounded
loss, and $\lambda > 0$. For every posterior $\rho$, with probability at
least $1 - \varepsilon$ over the draw of $S$,
\begin{equation*}
    \langle R\rangle_\rho \geq L_\lambda(\rho) \coloneqq 1 \!-\! \Phi^{-1}_{\lambda/ N}\!\left(
        1 \!-\! \langle r\rangle_\rho \!+\! \frac{\KL(\rho \| \pi) \!+\! \ln\frac{1}{\varepsilon}}{\lambda}
    \right)\,.
\end{equation*}
\end{lemma}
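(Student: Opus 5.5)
The plan is to apply \cref{thm:catoni} verbatim to the complementary loss $\bar\ell(u,y)\coloneqq 1-\ell(u,y)$ and then solve the resulting inequality for $\langle R\rangle_\rho$. Since $\ell$ takes values in $[0,1]$, so does $\bar\ell$. Joint measurability of $(h,x,y)\mapsto\bar\ell(h(x),y)$ is inherited from $\ell$. Hence $\bar\ell$ is an admissible loss for Catoni's theorem with the same data-independent prior $\pi$, the same $\lambda$, and the same $\varepsilon$.

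The associated true and empirical risks are
\[
\bar R(h)=1-R(h), \qquad \bar r(h)=1-r(h).
\]
By linearity of the posterior average,
\[
\langle \bar R\rangle_\rho=1-\langle R\rangle_\rho, \qquad \langle\bar r\rangle_\rho=1-\langle r\rangle_\rho.
\]
The KL term and the confidence term do not involve the loss, so they are unchanged. Catoni's bound for $\bar\ell$ therefore gives, with probability at least $1-\varepsilon$ over $S$, simultaneously for all $\rho\ll\pi$,
\[
1-\langle R\rangle_\rho\le \Phi^{-1}_{\lambda/N}\!\left(1-\langle r\rangle_\rho+\frac{\KL(\rho\Vert\pi)+\ln\frac1\varepsilon}{\lambda}\right).
\]
Rearranging gives exactly $\langle R\rangle_\rho\ge L_\lambda(\rho)$.

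Two bookkeeping points remain. If $\rho\not\ll\pi$, then $\KL(\rho\Vert\pi)=\infty$ and the certificate is vacuous, because $\Phi^{-1}_{a}$ is increasing and bounded above by $1/(1-e^{-a})$. The simultaneity over $\rho$ in \cref{thm:catoni} covers posteriors selected by a Markov kernel, which gives the ``for every posterior'' clause. I would also note that the bound is informative, that is $L_\lambda(\rho)>0$, only when the argument of $\Phi^{-1}$ is small enough. No monotonicity inversion is needed beyond this, since we never apply $\Phi$ to both sides.

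I expect no real obstacle. The only point needing care is checking that Catoni's theorem uses nothing specific to $\ell$ beyond boundedness in $[0,1]$ and measurability, so that it transfers to $\bar\ell$ unchanged. One should also check that the event of probability $1-\varepsilon$ is the event for the complementary loss. It is generally different from the upper-certificate event, so combining upper and lower certificates later will require a union bound.
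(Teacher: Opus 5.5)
Your proposal is correct and matches the paper's proof: apply Catoni's bound to the complementary loss $1-\ell\in[0,1]$, whose averaged risks are $1-\langle R\rangle_\rho$ and $1-\langle r\rangle_\rho$ while the KL and confidence terms are unchanged, and then rearrange. Your added remarks are also accurate: the case $\rho\not\ll\pi$ gives a vacuous bound, and the complementary-loss event needs a union bound with the upper-certificate event, as the paper does in \cref{thm:true_risk_separation}.
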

\begin{proof}
See~\cref{proof:lem:lower_cert}.
\end{proof}

\paragraph{Proof idea.} Apply the Catoni upper bound to the complementary loss $\ell'=1-\ell\in[0,1]$, whose posterior-averaged true and empirical risks are $1-\langle R\rangle_\rho$ and $1-\langle r\rangle_\rho$; rearranging turns the upper bound on $1-\langle R\rangle_\rho$ into a lower bound on $\langle R\rangle_\rho$.

The lower certificate turns the potential question into a two-sided test. The next result gives an exact criterion for separating the certificates and a sufficient criterion for proving that the Gibbs envelope is positive.

\begin{theorem}[Two-sided certificate for a positive Gibbs envelope] \label{thm:true_risk_separation}
Fix $\lambda>0$ and $\varepsilon\in(0,1/2)$.
For the always-PI and never-PI Gibbs posteriors, define $a^*\coloneqq \langle r^*\rangle_{\rhoAlways}
+(\KL(\rhoAlways\Vert\pi^*)+\log\varepsilon^{-1})/\lambda$ and $b\coloneqq 1-\langle r\rangle_{\rhoNever}
+(\KL(\rhoNever\Vert\pi)+\log\varepsilon^{-1})/\lambda$.
With probability at least $1-2\varepsilon$ over
$S^*\sim(\DataDistri^*)^{\otimes N}$ and its projection $S$,
\begin{equation} \label{eq:endpoint-two-sided-certificates}
\langle R^*\rangle_{\rhoAlways}\leq B_\lambda^*, \quad \langle R\rangle_{\rhoNever}\geq L_\lambda(\rhoNever).
\end{equation}
Conditionally on a realization $s^*$, separation of the two certificates
is characterized exactly by
\begin{equation} \label{eq:endpoint-separation}
B_\lambda^* < L_\lambda(\rhoNever) \Longleftrightarrow e^{-a^*\frac\lambda N} + e^{-b\frac\lambda N} > 1 + e^{-\frac\lambda N}.
\end{equation}
On the event in~\cref{eq:endpoint-two-sided-certificates}, either side of~\cref{eq:endpoint-separation} implies
\begin{equation} \label{eq:endpoint-true-risk-separation}
\langle R^*\rangle_{\rhoAlways} < \langle R\rangle_{\rhoNever}.
\end{equation}
Moreover, the following condition is sufficient for~\cref{eq:endpoint-separation}:
\begin{align}
\Delta_r &\coloneqq \langle r\rangle_{\rhoNever} - \langle r^*\rangle_{\rhoAlways} >\frac{\KL(\rhoAlways\Vert\pi^*)+\KL(\rhoNever\Vert\pi)+2\ln\frac1\varepsilon}{\lambda} + C_\lambda \label{eq:endpoint-separation-sufficient}
\end{align}
where $C_\lambda = 1-\frac{N(1-e^{-\lambda/N})}{\lambda}$. Since $\rhoAlways,\rhoNever$ are Gibbs measures, the above condition is equivalent to
\begin{equation} \label{eq:endpoint-partition-separation}
\log\left(Z_\lambda Z_\lambda^*\right) > 2\!\left(\ln\frac1\varepsilon - \lambda\langle r\rangle_{\rhoNever}\right) + \lambda C_\lambda.
\end{equation}
\end{theorem}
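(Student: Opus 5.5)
The plan has three parts: get the two-sided event from Catoni's bound, reduce the certificate separation to elementary algebra on $\Phi^{-1}_{\lambda/N}$, and derive the sufficient condition by linearizing the exponentials. Throughout write $c\coloneqq\lambda/N$.

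\textbf{Probability statement.} I will apply \cref{thm:catoni} on the augmented sample space $\DataSpace^*$, with prior $\pi^*$ and the loss $(h^*,(x,x^*,y))\mapsto\ell(h^*(x,x^*),y)$. This gives $\langle R^*\rangle_{\rhoAlways}\le B^*_\lambda=\Phi^{-1}_{c}(a^*)$ with probability at least $1-\varepsilon$; the bound applies to $\rhoAlways$ because the Gibbs rule is a Markov kernel. Separately, I will apply \cref{lem:lower_cert} with prior $\pi=T_\sharp\pi^*$ to the posterior $\rhoNever$, viewing the loss $\ell(h(x),y)$ as a function of $(x,x^*,y)$ that ignores $x^*$. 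This gives $\langle R\rangle_{\rhoNever}\ge L_\lambda(\rhoNever)$ with probability at least $1-\varepsilon$. Both priors are data-independent, since $T$ and $\pi^*$ are. A union bound then gives the event \cref{eq:endpoint-two-sided-certificates} with probability at least $1-2\varepsilon$.

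\textbf{Exact separation criterion.} By definition
\[
B^*_\lambda=\frac{1-e^{-ca^*}}{1-e^{-c}}.
\]
The argument of $\Phi^{-1}_c$ inside $L_\lambda(\rhoNever)$ is exactly $b$, so
\[
L_\lambda(\rhoNever)=1-\frac{1-e^{-cb}}{1-e^{-c}}=\frac{e^{-cb}-e^{-c}}{1-e^{-c}}.
\]
The common denominator $1-e^{-c}$ is positive. Hence $B^*_\lambda<L_\lambda(\rhoNever)$ holds iff $1-e^{-ca^*}<e^{-cb}-e^{-c}$, which is \cref{eq:endpoint-separation}. On the event \cref{eq:endpoint-two-sided-certificates} we can then chain the inequalities:
\[
\langle R^*\rangle_{\rhoAlways}\le B^*_\lambda<L_\lambda(\rhoNever)\le\langle R\rangle_{\rhoNever},
\]
which yields \cref{eq:endpoint-true-risk-separation}.

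\textbf{Sufficient condition.} I will lower-bound the left side of \cref{eq:endpoint-separation} using convexity, $e^{-x}\ge1-x$. This gives
\[
e^{-ca^*}+e^{-cb}\ge 2-c(a^*+b).
\]
It therefore suffices that $2-c(a^*+b)>1+e^{-c}$, that is,
\[
a^*+b<\frac{1-e^{-c}}{c}=1-C_\lambda.
\]
Substituting the definitions,
\[
a^*+b=1-\Delta_r+\frac{\KL(\rhoAlways\Vert\pi^*)+\KL(\rhoNever\Vert\pi)+2\ln\varepsilon^{-1}}{\lambda}.
\]
Rearranging the condition $a^*+b<1-C_\lambda$ gives exactly \cref{eq:endpoint-separation-sufficient}.

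\textbf{Partition-function form.} For the equivalence with \cref{eq:endpoint-partition-separation}, I will multiply \cref{eq:endpoint-separation-sufficient} by $\lambda$. I then insert the Gibbs identities \cref{eq:gibbs-identity} for both endpoints:
\[
\KL(\rhoAlways\Vert\pi^*)=-\lambda\langle r^*\rangle_{\rhoAlways}-\log Z^*_\lambda,\qquad \KL(\rhoNever\Vert\pi)=-\lambda\langle r\rangle_{\rhoNever}-\log Z_\lambda.
\]
The terms $\lambda\langle r^*\rangle_{\rhoAlways}$ cancel. What remains is
\[
2\lambda\langle r\rangle_{\rhoNever}+\log(Z_\lambda Z^*_\lambda)>2\ln\varepsilon^{-1}+\lambda C_\lambda,
\]
which is \cref{eq:endpoint-partition-separation}. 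Every step in this rearrangement is reversible, so the two conditions are equivalent.

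\textbf{Main obstacle.} I expect the main care to be needed in the probabilistic step rather than the algebra. Both certificates must be legitimately stated on the same augmented sample $S^*$. This requires that the $x$-only loss is a valid bounded, jointly measurable loss on $\DataSpace^*$, and that $\pi=T_\sharp\pi^*$ is sample-independent and belongs to $\Measures(\Hypo)$; the latter follows from the linear-growth assumption on $T$. The linearization step is safe, since it only requires checking that the strict inequality survives, and it does.
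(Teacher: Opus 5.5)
Your proposal is correct and follows the paper's proof essentially step for step. You union-bound Catoni's upper bound for $\rhoAlways$ with the complementary-loss lower certificate for $\rhoNever$, and use the closed form of $\Phi^{-1}_{\lambda/N}$ to obtain the exponential criterion. Your linearization $e^{-x}\ge 1-x$ is the paper's bound $\Phi^{-1}_c(x)\le cx/(1-e^{-c})$ applied at the exponential level, and it gives the same condition $a^*+b<N(1-e^{-\lambda/N})/\lambda$. The paper's appendix leaves the Gibbs-identity rearrangement into the partition-function form implicit; you write it out, and it checks out.
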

\begin{proof}
See~\cref{proof:thm:true_risk_separation}.
\end{proof}

\paragraph{Proof idea.} Pair the upper certificate $B_\lambda^*$ for the always-PI endpoint (\cref{thm:main}) with the lower certificate $L_\lambda(\rhoNever)$ (\cref{lem:lower_cert}) via a union bound. Writing both in terms of $\Phi^{-1}_{\lambda/N}$ and substituting its closed form turns $B_\lambda^*<L_\lambda(\rhoNever)$ into the exponential inequality~\cref{eq:endpoint-separation}. The linear upper bound $\Phi^{-1}_c(x)\le cx/(1-e^{-c})$ yields the simpler sufficient condition on $\Delta_r$, which the Gibbs identities rewrite as the product-of-partitions condition~\cref{eq:endpoint-partition-separation}. Note this controls the \emph{product} $Z_\lambda Z_\lambda^*$: $\Delta Z>0$ (i.e.\ $Z_\lambda^*>Z_\lambda$) is necessary but not sufficient for a true-risk separation.

\begin{corollary}[Deployment true-risk separation] \label{corollary:deployment-risk-separation}
In the setting of~\cref{thm:true_risk_separation}, define $u \coloneqq \langle r\rangle_{\rhoDeployed} + \frac{\KL(\rhoDeployed\Vert\pi)+\log\varepsilon^{-1}}{\lambda}$.
With probability at least $1-2\varepsilon$ over $S^*\sim(\DataDistri^*)^{\otimes N}$,
\begin{equation}
\langle R\rangle_{\rhoDeployed}\leq B_\lambda(\rhoDeployed), \quad \langle R\rangle_{\rhoNever}\geq L_\lambda(\rhoNever).
\end{equation}
Conditionally on $s^*$, we have
\begin{equation*}
B_\lambda(\rhoDeployed) < L_\lambda(\rhoNever) \Longleftrightarrow
e^{-u\frac\lambda N}+e^{-b\frac\lambda N}>1+e^{-\frac\lambda N}.
\end{equation*}
A sufficient condition is
\begin{align}
&\langle r\rangle_{\rhoNever}-\langle r\rangle_{\rhoDeployed} > \frac{\KL(\rhoDeployed\Vert\pi) + \KL(\rhoNever\Vert\pi) + 2\ln\frac1\varepsilon}{\lambda} + C_\lambda\,. \label{eq:deployment-separation-sufficient}
\end{align}
\end{corollary}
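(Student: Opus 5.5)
The plan is to follow the proof of \cref{thm:true_risk_separation}, with the always-PI upper certificate replaced by the deployed upper certificate from \cref{corollary:deployment-certificate}.

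\textbf{Probability event.} Both $\rhoDeployed$ and $\rhoNever$ are posteriors on $\Hypo$ with the common data-independent prior $\pi=T_\sharp\pi^*$. Both are selected by Markov kernels: $\rhoNever$ is the Gibbs learner, and $\rhoDeployed$ is the pushforward under the fixed measurable $T$ of the Gibbs kernel on $\Hypo^*$. Apply \cref{thm:catoni} at level $\varepsilon$ to the loss $\ell(h(x),y)$, viewed on the augmented sample $S^*$. Since Catoni's bound holds uniformly over $\rho\ll\pi$, this gives $\langle R\rangle_{\rhoDeployed}\le B_\lambda(\rhoDeployed)$. Absolute continuity holds because $\rhoAlways\ll\pi^*$ implies $T_\sharp\rhoAlways\ll T_\sharp\pi^*$. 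Next apply \cref{lem:lower_cert} at level $\varepsilon$ to obtain $\langle R\rangle_{\rhoNever}\ge L_\lambda(\rhoNever)$. A union bound gives probability at least $1-2\varepsilon$.

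\textbf{Exact equivalence.} Write $c=\lambda/N$. Then $B_\lambda(\rhoDeployed)=\Phi^{-1}_c(u)=(1-e^{-cu})/(1-e^{-c})$ and $L_\lambda(\rhoNever)=1-(1-e^{-cb})/(1-e^{-c})$. Multiplying $B<L$ by $1-e^{-c}>0$ turns it into
\[
1-e^{-cu}<(1-e^{-c})-(1-e^{-cb}),
\]
which is exactly $e^{-cu}+e^{-cb}>1+e^{-c}$. This step is pure algebra with no loss.

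\textbf{Sufficient condition.} Use the linear upper bound $\Phi^{-1}_c(x)\le cx/(1-e^{-c})$, which holds by convexity of $1-e^{-cx}$ (tangent line at $0$). Then $B\le cu/(1-e^{-c})$ and $L\ge 1-cb/(1-e^{-c})$, so it suffices that $c(u+b)/(1-e^{-c})<1$, i.e.
\[
u+b<\frac{N(1-e^{-\lambda/N})}{\lambda}=1-C_\lambda.
\]
Substituting the definitions of $u$ and $b$, this rearranges to
\[
\langle r\rangle_{\rhoNever}-\langle r\rangle_{\rhoDeployed}>\frac{\KL(\rhoDeployed\Vert\pi)+\KL(\rhoNever\Vert\pi)+2\ln\varepsilon^{-1}}{\lambda}+C_\lambda,
\]
which is \cref{eq:deployment-separation-sufficient}.

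\textbf{Main obstacle.} The only delicate point is the validity of the first certificate. One must confirm that $\rhoDeployed$ is a legitimate kernel-selected posterior that is absolutely continuous with respect to $\pi$, and that the prior $\pi$ does not depend on the sample. Both facts follow from the data-independence of $T$ and $\pi^*$, as argued above. Unlike \cref{thm:main}, no Gibbs identity is available for $\rhoDeployed$, which is why the sufficient condition is left in terms of KL rather than partition functions.
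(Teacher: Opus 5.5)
Your proposal is correct and follows the paper's proof essentially step by step. You apply Catoni on the augmented sample to the $x$-only loss, which covers $\rhoDeployed$ uniformly, and combine it with the lower certificate for $\rhoNever$ via a union bound. You then substitute the closed form of $\Phi^{-1}_{\lambda/N}$ to get the exact equivalence, and use the linear bound $\Phi^{-1}_c(x)\le cx/(1-e^{-c})$ to reduce the sufficient condition to $u+b<1-C_\lambda$. One small wording slip: that linear bound follows from \emph{concavity} of $1-e^{-cx}$ (equivalently, convexity of $e^{-cx}$), not from convexity of $1-e^{-cx}$.
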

\begin{proof}
See~\cref{proof:corollary:deployment-risk-separation}.
\end{proof}

\subsection{From endpoint Gibbs posteriors to practical algorithms}
\label{sec:expfam}
The preceding results are stated for the canonical Gibbs form
$\rho\propto\pi e^{-\lambda r}$, which has a limited scope. We now extend them to the fixed-feature subclass of the exponential family~\citep{wainwright2008graphical}. We identify an exact extension: any data-independent term in the log-density of this class can be absorbed into a reshaped prior, resulting in an ordinary Gibbs measure on the empirical risk, to which we can directly transfer our theory of~\cref{sec:metric}.

\begin{definition}[Fixed-feature subclass of exponential family]
\label{def:fixed_feature}
A posterior $\rho\ll\pi$ is called \emph{fixed-feature} if there exist a
data-independent measurable map $\widetilde\phi\colon\Hypo\to\R^k$, a fixed vector $\mu\in\R^k$, and $\lambda>0$ such that
\begin{equation}
\label{eq:fixed_feature}
    \log\frac{\dif\rho}{\dif\pi}(h)
    = -\lambda\, r(h) - \langle\mu,\tilde\phi(h)\rangle - \psi(\lambda,\mu),
\end{equation}
where $\psi(\lambda,\mu)$ is the log-normalizing constant. Thus, the random sample $S$ enters the density only through $r$. The feature $\widetilde\phi$ and its coefficient $\mu$ describe a fixed shape.
\end{definition}

This class contains exact Bayesian posteriors for linear, generalized-linear,
and Gaussian-process models. Quadratic Gibbs members have means and modes that
recover kernel ridge, frozen last-layer, and NTK-linearized estimators, and
include Gaussian variational laws only under an explicit fixed-natural-parameter
condition. Each claim and its limitations are verified separately
in~\cref{sec:fixed-feature-exp-family-example}.

\begin{lemma}[Reduction to a Gibbs posterior]\label{lem:reduction}
Define the reshaped prior $\pi_\mu \propto \pi\, e^{-\langle\mu,\tilde\phi\rangle}$. Every fixed-feature posterior is the Gibbs posterior on $r$ at temperature $\lambda$ relative to $\pi_\mu$:
\begin{equation} \label{eq:fixed-feature-gibbs}
\frac{\dif\rho}{\dif\pi_\mu}(h) =\frac{e^{-\lambda r(h)}}{\E_{h\sim\pi_\mu}e^{-\lambda r(h)}}\,.
\end{equation}
\end{lemma}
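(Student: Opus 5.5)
The plan is a direct Radon--Nikodym computation through the chain rule for densities; the only real work is checking that the reshaped prior is a well-defined probability measure equivalent to $\pi$, so that the chain rule applies and the normalizing constants match.

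First I define the reshaped prior explicitly. Set $c_\mu\coloneqq\int e^{-\langle\mu,\tilde\phi(h)\rangle}\pi(\dif h)$ and $\dif\pi_\mu/\dif\pi\coloneqq e^{-\langle\mu,\tilde\phi\rangle}/c_\mu$. For this to make sense I need $0<c_\mu<\infty$.

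Positivity is automatic: the integrand is strictly positive and $\pi$ is a probability measure. Finiteness is the one point that needs an argument, and this is the step I expect to be the main obstacle. I will extract it from the hypothesis that $\rho$ is a probability measure with log-density \cref{eq:fixed_feature}. Indeed, $1=\int e^{-\lambda r-\langle\mu,\tilde\phi\rangle-\psi}\dif\pi$, and since $\ell\in[0,1]$ gives $r\in[0,1]$, we have $e^{-\lambda r}\ge e^{-\lambda}$. Hence $c_\mu\le e^{\lambda+\psi(\lambda,\mu)}<\infty$.

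Because $\pi_\mu$ is defined through $\pi$ and the data-independent pair $(\tilde\phi,\mu)$, it is itself data-independent. Because its density is strictly positive and finite, $\pi_\mu$ and $\pi$ are mutually absolutely continuous. In particular $\rho\ll\pi_\mu$, and $\dif\pi/\dif\pi_\mu=c_\mu e^{\langle\mu,\tilde\phi\rangle}$.

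Next I apply the chain rule:
\[
\frac{\dif\rho}{\dif\pi_\mu}=\frac{\dif\rho}{\dif\pi}\cdot\frac{\dif\pi}{\dif\pi_\mu}
= e^{-\lambda r-\langle\mu,\tilde\phi\rangle-\psi}\,c_\mu e^{\langle\mu,\tilde\phi\rangle}
= c_\mu e^{-\psi(\lambda,\mu)}\,e^{-\lambda r}.
\]
The feature term cancels, so the density of $\rho$ with respect to $\pi_\mu$ is proportional to $e^{-\lambda r}$.

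Finally, I identify the constant. Integrating the last display against $\pi_\mu$ and using that $\rho$ has total mass one gives
\[
c_\mu e^{-\psi}\,\E_{h\sim\pi_\mu}e^{-\lambda r(h)}=1 .
\]
Since $r\in[0,1]$, this expectation lies in $[e^{-\lambda},1]$, so it is strictly positive and finite. Therefore $c_\mu e^{-\psi}=\bigl(\E_{\pi_\mu}e^{-\lambda r}\bigr)^{-1}$, which is exactly \cref{eq:fixed-feature-gibbs}.

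As a by-product, taking logarithms of the normalization identity gives $\psi(\lambda,\mu)=\log c_\mu-\log Z_\lambda^{\pi_\mu}$, where $Z_\lambda^{\pi_\mu}$ is the partition function relative to $\pi_\mu$. This matches the Gibbs form \cref{eq:gibbs} relative to $\pi_\mu$, which is what lets \cref{lem:gibbs_min} and \cref{thm:main} transfer with $\pi$ replaced by $\pi_\mu$.
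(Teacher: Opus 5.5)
Your proof is correct and takes essentially the same route as the paper's: the feature term cancels, so $\dif\rho/\dif\pi_\mu$ is proportional to $e^{-\lambda r}$, and the normalizing constants must agree because both sides are probability measures. Your explicit check that $0<c_\mu\le e^{\lambda+\psi(\lambda,\mu)}<\infty$ is a useful detail the paper leaves implicit, as is the resulting mutual absolute continuity of $\pi$ and $\pi_\mu$ that licenses the chain rule; note that this finiteness bound uses the standing assumption $\ell\in[0,1]$.
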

\begin{proof}
See~\cref{proof:lem:reduction}.
\end{proof}

The reduction is exact, but its data-independence condition is essential. If
$(\widetilde\phi,\mu)$ is fixed before observing the sample, then $\pi_\mu$ is
a valid PAC-Bayes prior, and the endpoint results
in~\cref{thm:main,thm:true_risk_separation} apply with
$\pi$ replaced by $\pi_\mu$ and $Z_\lambda$ by $\mathbb{E}_{\pi_\mu}[e^{-\lambda r}]$, with the analogous substitutions on $\Hypo^*$.

\section{Empirical Validation} \label{sec:verification}
We evaluate the log-partition function gap~\cref{delta_z} between two endpoint posteriors motivated by~\cref{thm:main},
i.e., $\Delta Z>0$, and whether its magnitude tracks the realized gain from PI.
In both studies, $T$ is identified with an identity map on the parameter space shared by \Hypo and $\Hypo^*$, hence eliminating the need to consider the additional deployment distortion of PI. Non-identity examples of $T$ are given in~\cref{sec:nontrivial-deployment-examples}.

\subsection{Bayesian estimation of a Gaussian mixture} \label{sec:GMM}
Our first case study is Bayesian parameter estimation for a two-component
isotropic Gaussian mixture model (GMM):
$$
p_\theta(x)=w\,\mathcal{N}(x;\mu_1,\sigma_1^2 I_n)+(1-w)\,\mathcal{N}(x;\mu_2,\sigma_2^2 I_n)\,,
$$
where $\mathcal{N}(x;\mu,\sigma^2 I_n)$ denotes the probability density at point $x\in\R^n$ of a Gaussian r.v.\ with parameter $(\mu, \sigma^2)$. Denoting by $\Theta\coloneqq [0,1]\times\R^n\times\R^n\times\R_{>0}\times\R_{>0}$ the parameter space of GMM, the hypothesis space thereof is
$$
\Hypo_{\textsc{gmm}} = \{p_\theta \mid \theta=(w,\mu_1,\mu_2,\sigma_1^2,\sigma_2^2)\in\Theta\}\,.
$$
In this study, we set the privileged information to be, for each $x_i$, the label $z_i\in\{1,2\}$ of the Gaussian cluster from which $x_i$ is sampled.
Therefore, the hypothesis of the LUPI case is the joint density function of $(x,z)$,
$$
p_\theta^*(x,z)= w_z\mathcal N(x;\mu_z,\sigma_z^2I_n),\quad \Hypo^*_{\textsc{gmm}}=\{p_\theta^* \mid \theta\in\Theta\}\,,
$$
where we let $w_1=w$ and $w_2=1-w$.
Thus, both hypothesis spaces are parameterized/identified by the same parameter space $\Theta$. However, $\theta\mapsto p_\theta$ is not a bijective identification, e.g., swapping the label of Gaussian clusters leads to the same $p_\theta$. We consequently formulate Bayesian inference on $\Theta$ and understand the corresponding measures on $\Hypo_{\textsc{gmm}}$ and $\Hypo_{\textsc{gmm}}^*$ as pushforwards under $\theta\mapsto p_\theta$ and $\theta\mapsto p_\theta^*$, respectively. Deployment marginalizes the training-only label $z$,
\begin{equation} \label{eq:gmm-deployment}
[Tp_\theta^*](x) \coloneqq p_\theta^*(x, 1) + p_\theta^*(x, 2) = p_\theta(x)\,.
\end{equation}
Equivalently, $T$ can be identified with the identity map on $\Theta$. We, therefore, place one proper, data-independent prior $\pi$ on $\Theta$ and use its pushforwards as the priors on both hypothesis spaces. We take a Beta prior for $w$ and independent Normal--Inverse-Gamma (NIG) priors for $(\mu_z,\sigma_z^2)$, $z\in\{1,2\}$; see~\cref{sec:gmm-details} for details.
Given a PI-augmented data point $s^*=((x_i, z_i))_{i=1}^N$ and its projection $s$, we took the negative log-likelihood for the empirical risk functional:
$$
r(\theta) = -\frac1N\sum_{i=1}^N\log p_\theta(x_i), \quad r^*(\theta) = -\frac1N\sum_{i=1}^N\log p_\theta^*(x_i, z_i)\,.
$$
We note that the risks are unbounded, so the Catoni certificates do not apply literally; therefore, we use $\Delta Z$ diagnostically and verify it tracks the realized gain, leaving a bounded-surrogate or unbounded-loss guarantee to future work. Because these are average negative log-likelihoods, we have, for $\lambda=N$, the Gibbs potential satisfies $\exp(-\lambda r(\theta)) =\prod_{i=1}^N p_\theta(x_i)$ and $\exp(-\lambda r^*(\theta)) = \prod_{i=1}^N p_\theta^*(x_i,z_i)$.
Hence, the Bayesian posteriors coincide exactly with the two endpoint Gibbs measures, e.g., for $\rhoNever$ and $B\in \mathcal{B}(\Theta)$,
\begin{align}
\Pr(\theta \in B \mid S=s) &= \frac{\int_B p_\theta(s) \pi(\dif\theta)}{\int_\Theta p_{\theta'}(s)\pi(\dif\theta')} = \int_B\frac{e^{-\lambda r(\theta)}}{Z_\lambda(s)}\pi(\dif\theta) = \rhoNever(B)\,,
\end{align}
where $Z_\lambda(s)$ is precisely the Bayesian marginal likelihood.
Here, no prior reshaping through~\cref{lem:reduction} is required. For $\lambda\neq N$, the factor $\exp(-\lambda r(\theta))=\prod_i p_\theta(x_i)^{\lambda/N}$ instead defines a tempered posterior; we therefore take $\lambda=N$ throughout this example. Both endpoint Gibbs measures are computable.

Without PI, the likelihood is a product of mixtures and is not conjugate to
the Beta--NIG prior; we sample $\rhoNever$ using a Metropolis--Hastings algorithm. The PI likelihood then factorizes into a Beta--Binomial term and two NIG terms, allowing exact ancestral sampling for $\rhoAlways$. Taking samples from both endpoint measures, $\Delta Z$ can be estimated with Monte Carlo integration. Full posterior updates and computational details are given in \cref{sec:gmm-details}.

\begin{figure}[t]
    \centering
    \includegraphics[width=0.85\linewidth,trim=02mm 05mm 02mm 02mm,clip]{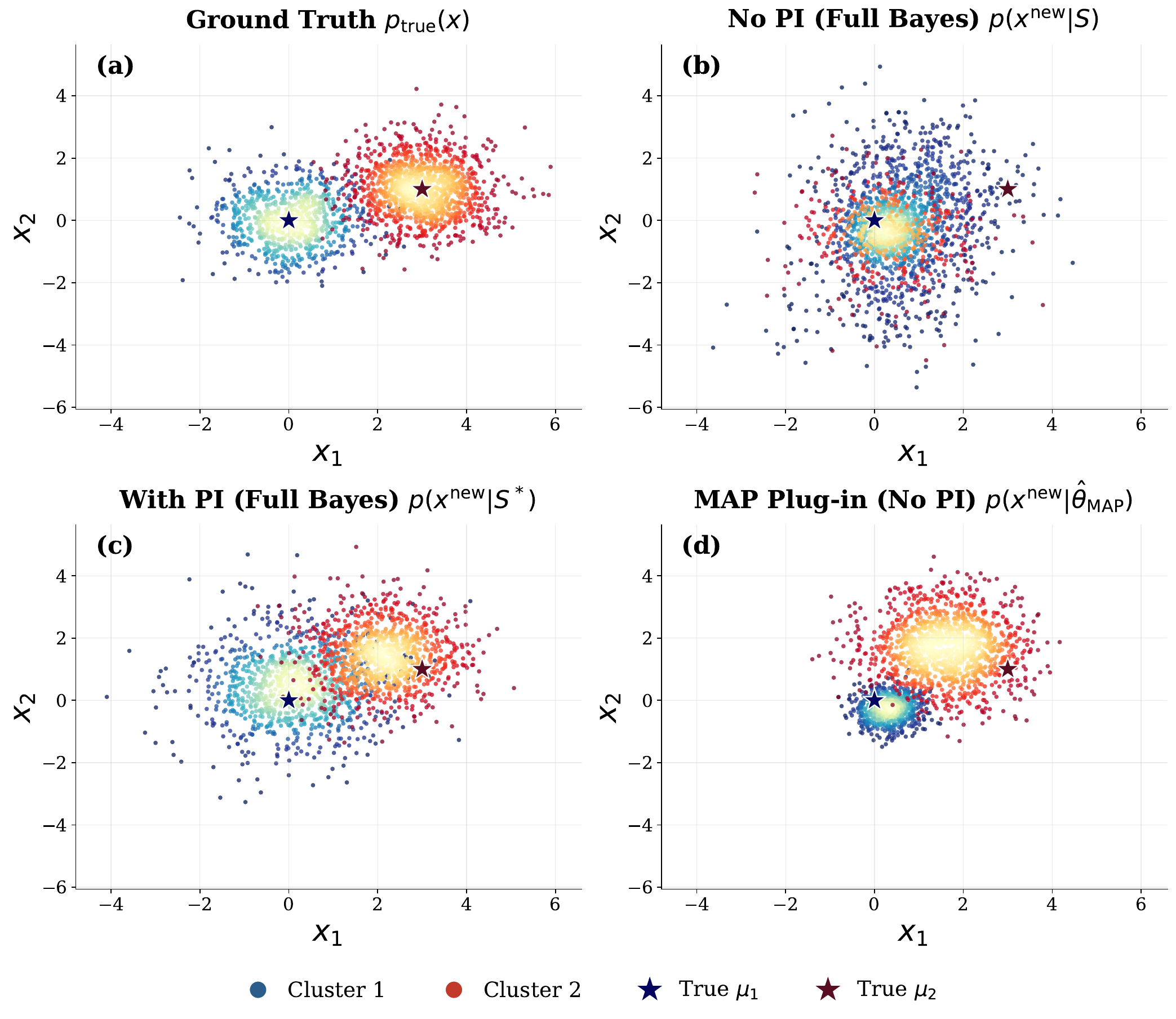}
    \caption{Posterior predictive densities for a two-component GMM
    trained on $N=8$ observations. (a) the generating
    density, (b) the predictive distribution with $\rhoNever$, (c) the predictive distribution with $\rhoDeployed$, (d) the MAP plug-in density.}
    \label{fig:gmm-predictive}
\end{figure}

\begin{figure}[t]
    \centering
    \includegraphics[width=0.65\linewidth,trim=02mm 02mm 02mm 02mm,clip]{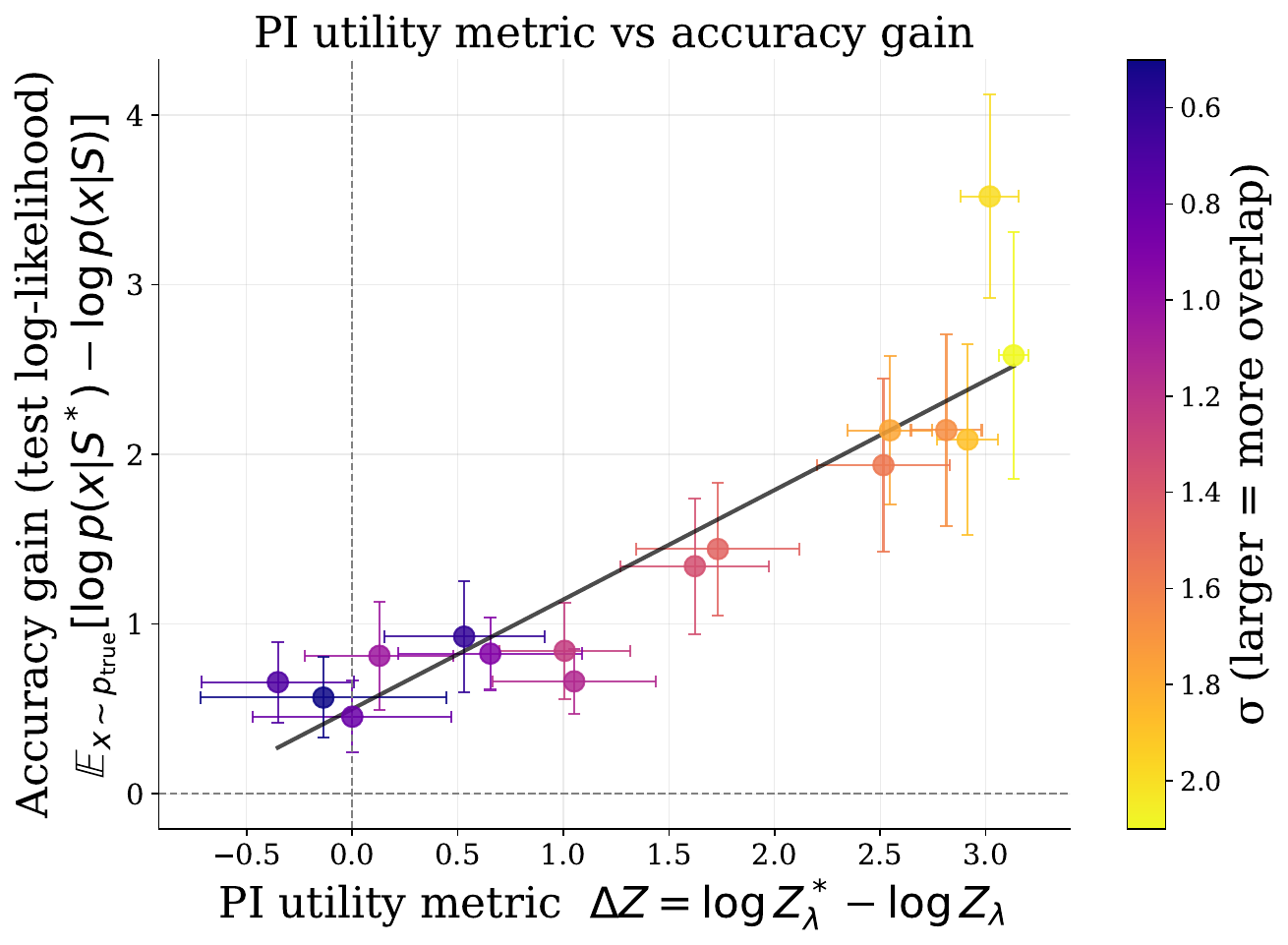}
    \caption{PI utility metric versus realized gain. Each point is one
    cluster-overlap setting $\sigma$ (colour), averaged over $14$ random
    datasets of $N=8$ points. As overlap decreases, both $\Delta Z$ and the
    test log-likelihood gain from privileged information grow together
    ($r=0.911$).}
    \label{fig:correlation}
\end{figure}

At inference time, we compare three predictors: (1) Bayesian predictive distribution w/o PI (No PI (Full Bayes)), i.e., $\overline{p}_{\text{never}}(x)\coloneqq
\int_\Theta p_\theta(x) \rhoNever(\dif\theta)$; (2) Bayesian predictive distribution w/ PI (With PI (Full Bayes)), i.e., $\overline{p}_{\textsc{lupi}}(x)\coloneqq
\int_\Theta p_\theta(x) \rhoAlways(\dif\theta)$; and (3) maximum a posteriori estimation (MAP) w/o PI (MAP Plug-in (No PI)), included as a point-estimation baseline.

\Cref{fig:gmm-predictive} illustrates these predictives for a very low number of training samples $N=8$. The
privileged labels reduce component ambiguity and yield a posterior predictive
that is more concentrated around the generating components.
In~\cref{fig:correlation}, we vary the component overlap and average over
$14$ datasets per setting.
The strong positive association between $\Delta Z$ and the testing accuracy gain (Pearson $r=0.911$) shows that the partition gap
successfully predicts the realized gain from PI across overlap regimes.

\subsection{Gaussian Process Classification with Privileged Noise (GPC+)}
We extend our analysis to Gaussian Process Classification (GPC)~\citep{McKay} as a nonparametric Bayesian method for binary classification.
We study whether the partition-function gap $\Delta Z$ can retain its diagnostic power in a supervised scenario as well.

Assume $k\colon \X\times\X \to \R$, a real-valued positive definite function (a.k.a.\ kernel). Let $\mathcal{H}_k$ denote the reproducing kernel Hilbert space (RKHS) induced by kernel $k$. Consider the Banach space $C(\X)$ consisting of continuous latent functions $f\colon\X\to\R$, equipped with uniform norm $\lVert\cdot\rVert_\infty$.
We endow a data-independent prior $\pi=\mathcal{GP}(0,k)$ on $C(\X)$, meaning that for any finite set $X = \{x_i\}_{i=1}^N \subset \X$, the prior $\pi$ stipulates $\mathbf{f} \coloneqq (f(x_1), \ldots, f(x_N))\sim \mathcal{N}(0, K)$ with $K\in S^N_{++}$, the cone of positive definite matrices, and $K_{ij} = k(x_i, x_j)$. Then, the latent-function hypothesis space is
$
\Hypo_{\textsc{gpc}} \coloneqq \operatorname{supp}(\pi) = \overline{\mathcal{H}_k}^{\,\|\cdot\|_\infty} \subseteq C(\X).
$
For the true labels $\mathbf{y}=(y_1, \ldots, y_N)$ corresponding to $X$ with $y_i\in\{-1,1\}$, the model assumes a noisy threshold variable $z_i = f(x_i) + u_i$ with $u_i \sim \mathcal{N}(0, 1)$ and $y_i = \operatorname{sign}(z_i)$, resulting in the probit likelihood
\begin{equation*}
p(y_i \mid f(x_i)) = \Pr\big(y_i (f(x_i) + u_i) > 0\big) = \Phi(y_i f(x_i)),
\end{equation*}
where $\Phi(x) = \tfrac{1}{2}\big(1 + \operatorname{erf}(x/\sqrt{2})\big)$ is the
standard normal CDF. Inference amounts to computing the posterior
$p(\mathbf{f} \mid X, \mathbf{y}) \propto p(\mathbf{y} \mid \mathbf{f}) \pi(\mathbf{f})$, which --- unlike in the conjugate GMM case --- is
non-Gaussian and must be approximated.

PI indicates the reliability of each training label, and it enters the decision threshold via an extra noise term $\varepsilon_i \sim \mathcal{N}(0, \sigma^2_i)$ that is additive, independent, and localized~\citep{GPC+}. In our case, we use a pre-defined mapping for the privileged noise and the corresponding per-point standard deviation, whose role is shown in~\cref{pi_noise_probit}:
\begin{equation}
\label{eq:noise and variance}
\sigma^2_i\coloneqq \sigma_0 e^{-\lVert x^*_i\rVert/\tau}, \quad q(x^*_i)\coloneqq\sqrt{1+\sigma^2_i},
\end{equation}
where $x^*_i$ is the PI variable w.r.t.\ $x_i$, indicating the reliability of the label $y_i$. Combining the noise yields $u_i + \varepsilon_i \sim \mathcal{N}(0, 1 + \sigma^2_i)$, which expands the probit variance:
\begin{equation}
\label{pi_noise_probit}
p(y_i \mid f(x_i), x_i^*) =  \Phi\left(\frac{y_i f(x_i)}{q(x_i^*)}\right)\,.
\end{equation}
The LUPI hypothesis space (GPC+) is
$
h^*_f(x,x^*)\coloneqq f(x)/q(x^*),\;  \Hypo_{\textsc{gpc}}^*\coloneqq\{h_f^*\colon f\in\Hypo_{\textsc{gpc}}\}\,.
$
Both hypothesis spaces are therefore indexed by the same latent function
$f$. Since $q$ is fixed and strictly positive, this indexing is one-to-one, hence the deployment map is $T h_f^*\coloneqq f$;
so $T$ is represented by the identity map on the common $f$-parameterization. For a realization $s^*=((x_i,x_i^*,y_i))_{i=1}^N$ and its projection $s$, we took the following two empirical risk functionals:
$
r(f) =-\frac1N\sum_{i=1}^N\log\Phi\!\left(y_if(x_i)\right)$, $r^*(h^*_f) =-\frac1N\sum_{i=1}^N \log\Phi\!\left(\frac{y_if(x_i)}{q(x_i^*)}\right).
$
As in \cref{sec:GMM}, the probit log-loss is unbounded, so $\Delta Z$ is again used diagnostically. The PI scale flattens the likelihood of examples with $\lVert x_i^*\rVert$ close to
zero and hence reduces their influence. As in \cref{sec:GMM}, the Bayesian posterior coincides with the endpoint Gibbs measures precisely at $\lambda=N$, i.e., for any Borel set $B$ in $\Hypo$, $\rhoNever(B) = \Pr(\mathbf{f} \in B \mid X, \mathbf{y})$ and $\rhoAlways(B) = \Pr(\mathbf{f} \in B \mid X, X^*, \mathbf{y})$.
The partition functions are marginal likelihoods
\begin{align}
  Z_\lambda&=\int\!\prod_{i=1}^N\Phi(y_if(x_i))\pi(\dif\mathbf{f}),\\
  Z_\lambda^* &=\int\!\prod_{i=1}^N\Phi\!\left[\frac{y_if(x_i)}{q(x_i^*)}\right]\pi(\dif\mathbf{f})\,,
\end{align}
where $\pi(\dif\mathbf{f}) = \mathcal{N}(\mathbf{f};0, K)\dif\mathbf{f}$.
Neither integral is available in closed form. We approximate both posteriors
and both log evidences by expectation propagation (EP), using the same GP
prior and convergence rule in the two endpoints, and compute
$\Delta Z=\log Z_N^*-\log Z_N$. At test time, PI is absent and both models
predict from the deployed latent function $f$ using $x$ alone.

We evaluate this construction on a noisy circular-boundary task in
$\R^2$. With margin
$m_i=x_{i,1}^2+x_{i,2}^2-r_0^2$, the clean label is
$y_i^{\rm true}=\operatorname{sign}(m_i)$, while labels close to the boundary
are corrupted during training. The privileged feature is a noisy observation
$x_i^*=|m_i|+\xi_i^*$ of boundary proximity, where $\xi_i^*$ has noise scale
$\sigma^*$; the per-point variance $q(x_i^*)$ is then computed with~\cref{eq:noise and variance}. In this example, we set $N=50$.

We compare the following two models:
(1) \textbf{GPC+ (Privileged Model):} $\sigma_i^2 = \sigma_0 \, e^{-\lVert x^*_i\rVert/\tau}$; and (2) \textbf{Baseline:} $\sigma_i^2 \equiv 0$.
Both models are trained with the Gaussian kernel $k(x, y) = \sigma_f^2 \exp(-\lVert x - y\rVert^2 / 2\ell^2)$, where $\ell$ is the lengthscale (a.k.a.\ bandwidth) hyperparameter. Reliable PI assigns a large variance $\sigma^2_i(x_i^*)$ to examples near the noisy boundary and flattens their probit sites; increasing $\sigma^*$ progressively destroys this alignment.
We compare the evidence gap $\Delta Z$ with the clean-test accuracy gain
$\Delta_{\rm acc}\coloneqq\operatorname{Acc}_{\textsc{lupi}}
-\operatorname{Acc}_{\rm never}$.

\begin{figure}[t]
    \centering
    \includegraphics[width=.48\linewidth,trim=0mm 0mm 00mm 0mm,clip]{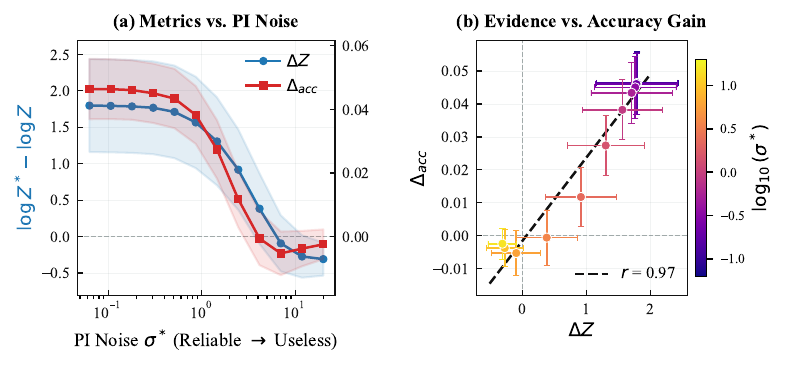}
    \includegraphics[width=.51\linewidth,trim=0mm 0mm 00mm 0mm,clip]{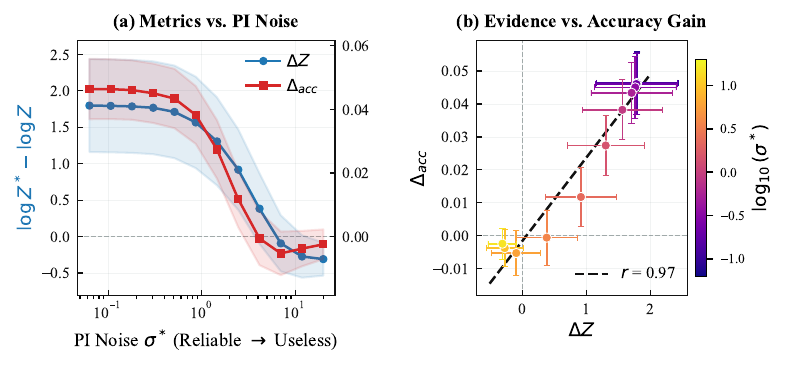}
    \caption{GPC+ on the noisy circular-boundary task, as the privileged feature is
degraded by increasing its noise level $\sigma^*$ (reliable $\to$ useless).
\textbf{(a)} The evidence gap $\Delta_Z = \log Z^* - \log Z$ (blue, left axis)
and the test-accuracy gain $\Delta_{\text{acc}} = \text{Acc}^* - \text{Acc}$
(red, right axis) decay together and vanish once PI carries no useful
information.
\textbf{(b)} The same quantities plotted against each other, coloured by
$\log_{10}(\sigma^*)$.}
\label{fig:gpc-pi}
\end{figure}

\Cref{fig:gpc-pi}(a) shows that $\Delta Z$ and $\Delta_{\rm acc}$ decrease
together and approach zero as the PI channel becomes uninformative for large noise levels.
Both are also highly correlated as depicted in \Cref{fig:gpc-pi}(b).
Hence,
the same training-time diagnostic $\Delta Z$ that predicts PI utility in the
GMM also tracks its realized gain in a supervised, non-conjugate,
function-space model.

\section{Conclusion} \label{sec:conclusion}
We introduced an algorithm-agnostic PAC-Bayesian account of privileged
information (PI) that separates its \emph{potential} from its \emph{realization}
at deployment. The endpoint Gibbs measures turn the former into an exact
partition-function comparison, while the deployment correction quantifies
how much of this advantage survives the removal of PI. Two-sided certificates
further provide sufficient conditions for a genuine separation of true risks,
and the reduction to a fixed-feature subclass of the exponential family extends the analysis beyond canonical Gibbs posteriors. The GMM and privileged-noise GPC experiments show that the same
training-time partition gap tracks realized gains in generative and
discriminative settings.
We highlight that characterizing the slope of this relationship turns the diagnostic from an ordering into a calibrated, quantitative predictor of the achievable gain per unit and remains an open direction for future work.

The present theory assumes data-independent priors and deployment maps, and the stated Catoni certificates require bounded losses; our log-likelihood experiments therefore use the partition gap diagnostically. Computing partition functions may also be difficult in large models, and favorable endpoint potential alone cannot guarantee a good deployment. Future work
should develop unbounded-loss and data-dependent-prior extensions, scalable
evidence estimators, and learning principles for deployment maps that retain
PI-derived information. These directions would turn the proposed criterion
from a model-selection diagnostic into a practical design tool for LUPI.

\newpage

\bibliographystyle{plainnat}
\bibliography{references}

\newpage
\appendix
\section{Proofs} \label{app:proofs}

\subsection{Gibbs Posterior Minimization} \label{proof:lem:gibbs_min}

\begin{proof}[Proof of \cref{lem:gibbs_min}]
Since $\Phi^{-1}_{\lambda/N}$ is strictly increasing, minimizing
$B_\lambda(\rho)$ is equivalent to minimizing
$\lambda\langle r\rangle_\rho+\KL(\rho\Vert\pi)$. From
$\dif\rho_\lambda/\dif\pi=e^{-\lambda r}/Z_\lambda$,
\begin{align*}
\KL(\rho\Vert\rho_\lambda)
&=\E_{h\sim\rho}\!\left[
\log\frac{\dif\rho}{\dif\pi}
-\log\frac{\dif\rho_\lambda}{\dif\pi}
\right]\\
&=\KL(\rho\Vert\pi)+\lambda\langle r\rangle_\rho+\log Z_\lambda.
\end{align*}
Therefore,
\begin{equation*}
\lambda\langle r\rangle_\rho+\KL(\rho\Vert\pi)
=\KL(\rho\Vert\rho_\lambda)-\log Z_\lambda.
\end{equation*}
The right-hand side is uniquely minimized at $\rho=\rho_\lambda$, with value
$-\log Z_\lambda$. Substitution into $B_\lambda$ proves the claim.
\end{proof}

\subsection{Endpoint Certificate Criterion} \label{proof:thm:main}

\begin{proof}[Proof of \cref{thm:main}]
The Catoni bound~\cref{eq:catoni} applied to $\rhoNever$ with $r$ gives
$\langle R\rangle_{\rhoNever} \le B_\lambda$ with probability $1-\varepsilon$. Applied
to $\rhoAlways$ with $r^*$, it bounds
$\langle R\rangle_{\rhoAlways}$;
hence $\langle R\rangle_{\rhoAlways} \le B^*_\lambda$ with probability $1-\varepsilon$.
A union bound gives both simultaneously with probability $1-2\varepsilon$.

Since $\Phi^{-1}_{\lambda/N}$ is strictly increasing, $B^*_\lambda < B_\lambda$
holds if and only if their arguments are ordered; the $\ln(1/\varepsilon)/\lambda$
terms cancel, leaving $\Delta_r > \Delta_{\mathrm{KL}}/\lambda$. Substituting the
Gibbs identities
$\mathrm{KL}(\rhoNever\|\pi) = -\lambda\,\langle r\rangle_{\rhoNever} - \log Z_\lambda$
and
$\mathrm{KL}(\rhoAlways\|\pi^*) = -\lambda\,\langle r^*\rangle_{\rhoAlways} - \log Z^*_\lambda$
yields
\begin{equation}
\label{delta_z_NEW}
    \Delta_r - \frac{\Delta_{\mathrm{KL}}}{\lambda}
    = \frac{\log Z^*_\lambda - \log Z_\lambda}{\lambda}
    = \frac{\Delta Z}{\lambda},
\end{equation}
so the condition reduces to $Z^*_\lambda > Z_\lambda$ since $\lambda > 0$.
\end{proof}

\subsection{Deployment-Certificate Criterion} \label{proof:corollary:deployment-certificate}

\begin{proof}[Proof of \cref{corollary:deployment-certificate}]
Since $\rhoAlways\ll\pi^*$ and $\pi=T_\sharp\pi^*$, we have $\rhoDeployed=T_\sharp\rhoAlways\ll\pi$. Apply~\cref{thm:catoni} on the privileged sample space with loss $(h,x,x^*,y)\mapsto\ell(h(x),y)$. Its empirical and population risks are $r$ and $R$, respectively. The PAC-Bayes event holds
simultaneously for every posterior on $\Hypo$, and therefore yields both inequalities in~\cref{eq:deployed-risk-certificates} without a union bound. Since $\Phi^{-1}_{\lambda/N}$ is strictly increasing, it suffices to compare
the arguments of the two certificates; their confidence terms cancel. The Gibbs identities and~\cref{eq:deployment-gap,eq:kl-contraction} give
\begin{align*}
&\left[\langle r\rangle_{\rhoNever} +\frac{1}{\lambda}\KL(\rhoNever\Vert\pi)\right] -\left[\langle r\rangle_{\rhoDeployed} +\frac{1}{\lambda}\KL(\rhoDeployed\Vert\pi)\right] = \frac{\Delta Z(s^*)-\lambda D_T(s^*)+\kappa_T(s^*)}{\lambda},
\end{align*}
which proves the main claim. Finally, we apply the argument from the data-processing
inequality to prove $\KL(\rhoDeployed\Vert\pi)\leq\KL(\rhoAlways\Vert\pi^*)$. Let $\varphi(t) = t\log t$ be defined for $t>0$, which is a convex function. Considering $\pi = T_\sharp\pi^*$, $\rhoDeployed = T_\sharp\rhoAlways$ and the fact that $\frac{\dif T_\sharp\rhoAlways}{\dif T_\sharp\pi^*}\circ T=\E\left(\frac{\dif\rhoAlways}{\dif\pi^*}\,\middle\vert \,\sigma(T)\right)$, we have
\begin{align*}
    \KL(\rhoDeployed\Vert\pi) &= \int_{\Hypo}\varphi\left(\frac{\dif \rhoDeployed}{\dif\pi}\right)\dif\pi \\
    &=\int_{\Hypo^*}\varphi\left(\frac{\dif \rhoDeployed}{\dif\pi}\circ T\right)\dif\pi^*\\
    &=\int_{\Hypo^*}\varphi\left(\E\left(\frac{\dif\rhoAlways}{\dif\pi^*}\,\middle\vert \,\sigma(T)\right)\right)\dif\pi^* \\
    &\le \int_{\Hypo^*}\E\left(\varphi\left(\frac{\dif\rhoAlways}{\dif\pi^*}\right)\,\middle\vert\,\sigma(T)\right)\dif\pi^* \\
    &=\int_{\Hypo^*}\varphi\left(\frac{\dif\rhoAlways}{\dif\pi^*}\right)\dif\pi^* \\
    &=\KL(\rhoAlways\Vert\pi^*).
\end{align*}
Therefore, $\kappa_T\geq0$, which proves~\cref{eq:sufficient-deployment} and the equality case.
\end{proof}

\subsection{Catoni Lower Certificate} \label{proof:lem:lower_cert}

\begin{proof}[Proof of \cref{lem:lower_cert}]
Apply the Catoni upper bound to the complementary loss $\ell' = 1 - \ell
\in [0,1]$, whose posterior-averaged true and empirical risks are
$1 - \langle R\rangle_\rho$ and $1 - \langle r\rangle_\rho$. With the same
prior $\pi$, temperature $\lambda$, and confidence $\varepsilon$, this gives with
probability $\geq 1 - \varepsilon$
\begin{equation*}
    1 - \langle R\rangle_\rho \leq \Phi^{-1}_{\lambda/N}\!\left(
        1 - \langle r\rangle_\rho + \frac{\mathrm{KL}(\rho \| \pi) + \ln\frac{1}{\varepsilon}}{\lambda}
    \right),
\end{equation*}
and rearranging yields the claim.
\end{proof}

\subsection{Positive Gibbs-Envelope Certificate} \label{proof:thm:true_risk_separation}

\begin{proof}[Proof of \cref{thm:true_risk_separation}]
Apply~\cref{thm:catoni} on $(\Hypo^*,\HypoBorel^*)$ to the PI loss, and apply
\cref{lem:lower_cert} to the ordinary loss on $(\Hypo,\HypoBorel)$. A union
bound proves~\cref{eq:endpoint-two-sided-certificates}. By definition,
$B_\lambda^*=\Phi_{\lambda/N}^{-1}(a^*)$ and $L_\lambda(\rhoNever)=1-\Phi_{\lambda/N}^{-1}(b)$. Hence,
$B_\lambda^* < L_\lambda(\rhoNever)$ iff
\begin{equation*}
\Phi_{\lambda/N}^{-1}(a^*)+\Phi_{\lambda/N}^{-1}(b) < 1\,.
\end{equation*}
Substitution of the definition $\Phi_{\lambda/N}^{-1}(x)=(1-e^{-x\lambda/N})/(1-e^{-\lambda/N})$ proves~\cref{eq:endpoint-separation}. Combining the two valid certificates in~\cref{thm:main,lem:lower_cert} gives~\cref{eq:endpoint-true-risk-separation}. For $x \geq 0$, the inequality $1-e^{-cx}\leq cx$, $c>0$, implies
$\Phi_{\lambda/N}^{-1}(x) \leq \frac{\lambda x}{N(1-e^{-\lambda/N})}$. Therefore, the condition $a^* + b < N(1-e^{-\lambda/N})/\lambda$ implies
\begin{equation*}
\Phi_{\lambda/N}^{-1}(a^*)+\Phi_{\lambda/N}^{-1}(b) \leq \frac{\lambda(a^* + b)}{N(1- e^{-\lambda/N})} \leq1\,.
\end{equation*}
Expanding $a^* + b$ proves~\cref{eq:endpoint-separation-sufficient}.
\end{proof}

\subsection{Deployment True-Risk Separation} \label{proof:corollary:deployment-risk-separation}

\begin{proof}[Proof of \cref{corollary:deployment-risk-separation}]
As above, $\rhoDeployed\ll\pi$. Apply~\cref{thm:catoni} on the augmented
sample space to the ordinary loss, and apply~\cref{lem:lower_cert} there to
the complementary loss. Both losses ignore $x^*$ and have population risks
$R$ and $1-R$ under the $(X,Y)$-marginal $\DataDistri$. The two PAC-Bayes
events are simultaneous over the posterior, so they include $\rhoDeployed$
and $\rhoNever$ even though the former depends on the full sample $S^*$. A
union bound shows that both inequalities in the corollary hold simultaneously
with probability at least $1-2\varepsilon$.

By the definitions of $u$ and $b$,
$B_\lambda(\rhoDeployed)=\Phi^{-1}_{\lambda/N}(u)$ and
$L_\lambda(\rhoNever)=1-\Phi^{-1}_{\lambda/N}(b)$. The same substitution used
for the endpoint measures yields the equivalence stated in the corollary. On
the valid event, certificate
separation gives
\begin{equation*}
\langle R\rangle_{\rhoDeployed} \le B_\lambda(\rhoDeployed) < L_\lambda(\rhoNever)\le\langle R\rangle_{\rhoNever}.
\end{equation*}
Finally, the sufficient condition in the corollary is equivalent to
$u+b<(1-e^{-\lambda/N})/(\lambda/N)$. The inequality
$\Phi_c^{-1}(x)\le cx/(1-e^{-c})$ then proves the stated sufficient criterion.
\end{proof}

\subsection{Fixed-Feature Reduction} \label{proof:lem:reduction}

\begin{proof}[Proof of \cref{lem:reduction}]
Relative to $\pi$, the right-hand side of
\cref{eq:fixed-feature-gibbs} has density proportional to
\begin{equation*}
e^{-\langle\mu,\widetilde\phi(h)\rangle}e^{-\lambda r(h)}
=e^{-\lambda r(h)-\langle\mu,\widetilde\phi(h)\rangle},
\end{equation*}
which is proportional to $\dif\rho/\dif\pi$ by
\cref{eq:fixed_feature}. Both sides are probability measures, so their
normalizing constants agree, and the measures are equal.
\end{proof}

\section{More Examples of the Deployment Map} \label{sec:nontrivial-deployment-examples}
In both validation studies in~\cref{sec:verification}, the deployment map $T$ becomes the identity after hypotheses are identified by the shared parameter $\theta$ (GMM) or latent function $f$ (GPC+). Nontrivial maps arise when the PI model has degrees of freedom that must be removed or averaged out at deployment.

Many LUPI methods jointly learn a deployable branch $f_\theta\colon \X\to\R$ and a correcting or teacher branch $g_\eta\colon\X^*\to\R$ that accesses $x^*$ during
training~\citep{Vapnik2009,LopezPaz2016,HoffmanGD16,Garcia2018}, giving rise to the privileged hypothesis
$$
h_{\theta,\eta}^*\colon \X\times \X^* \to \R^2,\; (x,x^*) \mapsto (f_\theta(x), g_\eta(x^*))\,.
$$
Identifying the hypothesis with the parameter pair $(\theta,\eta)$, the deployment map
is the projection
\begin{equation*}
[Th_{\theta,\eta}^*](x) \coloneqq f_\theta(x).
\end{equation*}

\paragraph{SVM+~\citep{Vapnik2009}.} Given the realization $s^*=(x_i,x_i^*,y_i)$, SVM+ jointly learns the deployable classifier $f_{w,b}(x)=\langle w,x\rangle+b$ and a privileged correcting function $g_{w^*,b^*}(x^*)=\langle w^*,x^*\rangle+b^*$ by solving
\begin{equation*}
\begin{aligned}
\min_{w,b,w^*,b^*}\quad& \tfrac12\|w\|^2+\tfrac{\gamma}{2}\|w^*\|^2 +C\sum_i g_{w^*,b^*}(x_i^*)\\
\mathrm{s.t.} \quad& y_i f_{w,b}(x_i)\geq1-g_{w^*,b^*}(x_i^*), \quad g_{w^*,b^*}(x_i^*)\geq0.
\end{aligned}
\end{equation*}
The privileged branch predicts training-example difficulty but is unavailable at inference. Deployment is therefore the many-to-one projection
$$
T(f_{w,b},g_{w^*,b^*})=f_{w,b}\,.
$$
The PI branch is discarded after training, making $T$ generally many-to-one, which can strictly contract the KL divergence (see~\cref{eq:kl-contraction}), i.e., $\KL(\rhoDeployed\Vert\pi) = \KL(T_\sharp\rhoAlways\Vert T_\sharp\pi^*) < \KL(\rhoAlways\Vert\pi^*)$.

\paragraph{Generalized distillation~\citep{LopezPaz2016}.}
Consider $c$-class classification with $y_i\in\Delta^c$, and let $\sigma\colon\R^c\to\Delta^c$ denote the softmax map. Generalized distillation (GD) works with logit-valued hypotheses, $f_t$ and $f_s$ --- known as the teacher and the student, $\mathcal{F}_t\subseteq\{f_t\colon\mathcal U^*\to\R^c\}$, $\mathcal{F}_s\subseteq\{f_s\colon\X\to\R^c\}$, where $\mathcal U^*=\X^*$ in the original formulation and may be
$\mathcal U^*=\X\times\X^*$ in multimodal variants~\citep{Garcia2018}. The corresponding predictive distributions are $\sigma\circ f_t$ and $\sigma\circ f_s$. Writing $x_i^*\in\mathcal{U}^*$ for the PI, the teacher is first learned
from the privileged examples, $\widehat f_t\in\operatorname*{\arg\,\min}_{f\in\mathcal F_t}\frac1N\sum_{i=1}^N \ell\!\left(y_i,\sigma\circ f(x_i^*)\right) +\Omega(\|f\|)$, where $\Omega$ is a regularizer.
Define the teacher's soft labels $s_i \coloneqq \sigma(\widehat f_t(x_i^*)/\tau)\in\Delta^c$.
The student $\widehat{f}_s$ is then learned on the ordinary inputs by balancing the hard labels with the teacher's soft labels, i.e.,
$
\widehat{f}_s \in \operatorname*{\arg\,\min}_{f\in \mathcal{F}_s} \frac1N\sum_{i=1}^N (1-w) \ell\!\left(y_i,\sigma\circ f(x_i)\right) + w\ell\!\left(s_i,\sigma\circ f(x_i)\right),
$
where $w\in[0,1]$. To fit GD into our framework, we set
$$
\Hypo_{\rm GD}^* \coloneqq \left\{ h_{f_s,f_t}^* \colon (x,x^*) \mapsto (f_s(x), f_t(x^*)) \right\}\,.
$$
Accordingly, we treat the learning rule of GD as
$$
\learner^* \colon ((x_i, x_i^*, y_i))_{i=1}^N \mapsto (\widehat{f}_s,\widehat{f}_t) \in\Hypo_{\rm GD}^*\,.
$$
Note that the learning rule $\learner^*$ here is a frequentist/point estimator, rather than the Bayesian learner we define in~\cref{sec:PAC-Bayes-basics}. To analyze generalized distillation with our results, we need to implement Bayesian inference/estimation for it.
Moreover, the deployment map is the data-independent projection
$$
[T h_{f_s,f_t}^*](x) \coloneqq f_s(x)\,.
$$
Modality-hallucination methods instantiate the same function-space viewpoint: the privileged network produces depth-derived logits or feature maps, while an RGB-input hallucination network is trained to reproduce them and is retained at test time~\citep{HoffmanGD16,Garcia2018}.

\paragraph{Transfer and marginalize (TRAM)~\citep{CollierJK+22}.} Taking the same setup as in generalized distillation, TRAM jointly learns an $x$-only prediction head and a
privileged prediction head over a shared feature space $\mathcal{Z}$. Let
$\phi\colon\X\to\mathcal{Z}$ be the shared feature map,
$\psi\colon\mathcal{Z}\times\X^*\to\mathcal Z^*$ the PI feature map, and define
the logit-valued hypotheses
\begin{align*}
f_{\phi,\theta}(x) &\coloneqq F_\theta(\phi(x)), \quad &F_\theta \colon \mathcal{Z}\to\R^c\,,\\
g_{\phi,\psi,\eta}(x,x^*) &\coloneqq G_\eta\!\left(\psi(\phi(x),x^*)\right), \quad &G_\eta \colon \mathcal{Z}^* \to\R^c\,.
\end{align*}
Their predictive distributions are respectively
$\sigma\circ f_{\phi,\theta}$ and $\sigma\circ g_{\phi,\psi,\eta}$, where $\sigma$ is
the softmax map. The corresponding PI hypothesis is therefore
\begin{equation*}
h_{\phi,\psi,\theta,\eta}^*(x,x^*) \coloneqq \left(f_{\phi,\theta}(x),g_{\phi,\psi,\eta}(x,x^*)\right).
\end{equation*}
Given the realization $s^*=((x_i,x_i^*,y_i))_{i=1}^N$, TRAM jointly learns these two heads by solving
$$
\!\!\min_{\phi,\psi,\theta,\eta}\sum_{i=1}^N \ell\!\left(
y_i,\sigma\!\circ\!F_\theta\!\circ\operatorname{sg}\circ\phi(x_i)\right)+\beta
\ell'\!\left( y_i,\sigma\!\circ\!g_{\phi,\psi,\eta}(x,x^*)\right)\,,
$$
where $\operatorname{sg}$ is the stop-gradient operator: it is the identity
during the forward pass but has zero derivative during backpropagation.
Consequently, the first loss $\ell$ trains the deployable head $\theta$ without modifying $\phi$, whereas the privileged loss $\ell'$ trains $(\phi,\psi,\eta)$. Knowledge from PI is therefore transferred to deployment through the shared representation
$\phi$, rather than through teacher soft labels as in generalized distillation.
Identifying the learned PI hypothesis with $(\phi,\psi,\theta,\eta)$, deployment retains the shared features and the ordinary head:
$$
[T h_{\phi,\psi,\theta,\eta}^*](x) \coloneqq f_{\phi,\theta}(x)\,.
$$
Thus, $T$ discards the PI-specific parameters $(\psi,\eta)$ while retaining the PI-trained feature map $\phi$.

\section{Examples of Fixed-Feature Distributions} \label{sec:fixed-feature-exp-family-example}
We provide more examples of~\cref{def:fixed_feature}
and state the conditions under which the representation is exact. Throughout,
the priors and posteriors are assumed proper and to belong to
$\mathcal M_1^+(\Hypo)$ under the metric fixed in~\cref{sec:background}.

\paragraph{Common likelihood identity.}
Let $\theta\in\Hypo$ have prior $\pi$ and likelihood
$p_\theta(y\mid x)>0$. With
$r_s(\theta)=-N^{-1}\sum_i\log p_\theta(y_i\mid x_i)$, Bayes' rule gives
\begin{equation*}
 \frac{\dif\rho}{\dif\pi}(\theta)
 =\frac{\prod_i p_\theta(y_i\mid x_i)}
 {\int\prod_i p_{\vartheta}(y_i\mid x_i)\pi(\dif\vartheta)}
 =\frac{e^{-Nr_s(\theta)}}{\int e^{-Nr_s(\vartheta)}\pi(\dif\vartheta)}.
\end{equation*}
Thus every proper Bayesian posterior based on an i.i.d.\ likelihood has the
fixed-feature density with $\lambda=N$ and $\mu=0$, provided its marginal
likelihood is finite and nonzero. The following paragraphs verify the named
families and specify what the hypothesis is in each case.

\paragraph{Bayesian linear regression.}
Let $h_\beta(x)=x^\top\beta$, let
$y\mid\beta\sim\mathcal N(X\beta,\sigma^2I_N)$ with fixed $\sigma^2>0$, and
take the data-independent prior $\pi=\mathcal N(m_0,V_0)$. Up to an additive
constant independent of $\beta$,
\begin{equation*}
r(\beta)=\frac{\|y-X\beta\|^2}{2N\sigma^2}, \quad \frac{\dif\rho}{\dif\pi}(\beta)\propto e^{-Nr_s(\beta)}.
\end{equation*}
Completing the square gives
$V_s^{-1}=V_0^{-1}+X^\top X/\sigma^2$ and
$m_s=V_s(V_0^{-1}m_0+X^\top y/\sigma^2)$, so
$\rho=\mathcal N(m_s,V_s)$. Hence the exact posterior, not merely its mean,
has the required density. Unknown noise or hierarchical parameters can be
included in $\theta$ and $\pi$; the common likelihood identity remains valid.

\paragraph{Generalized linear models.}
For a canonical GLM,
$p_\beta(y_i\mid x_i)=\exp(y_ix_i^\top\beta-b(x_i^\top\beta)+c(y_i))$.
Consequently,
\begin{equation*}
r(\beta) = \frac1N\sum_{i=1}^N\left[b(x_i^\top\beta)-y_ix_i^\top\beta-c(y_i)\right],\quad \frac{\dif\rho}{\dif\pi} \propto e^{-Nr}.
\end{equation*}
No conjugacy is needed: logistic, Poisson, and other Bayesian GLMs satisfy the
density identity whenever the posterior is proper. A noncanonical link is
covered by defining $r$ from its actual negative log-likelihood.

\paragraph{Gaussian-process regression.}
Let $\Hypo$ be a Polish path space carrying the data-independent Radon law
$\pi=\mathcal{GP}(m,k)$ (for example, $C(K)$ under the usual sample-continuity
conditions on a compact input domain $K$), and let
$Y_i\mid f\sim\mathcal N(f(x_i),\sigma^2)$ independently. Then
\begin{equation*}
r(f) = \frac{1}{2N\sigma^2}\sum_i(y_i-f(x_i))^2,\quad \frac{\dif\rho}{\dif\pi}(f) \propto e^{-Nr(f)}.
\end{equation*}
The likelihood depends on finitely many measurable evaluations of $f$, so the
Radon--Nikodym derivative is well defined; Gaussian conditioning gives the
usual GP posterior. Its posterior mean agrees with kernel ridge after the
standard identification of the noise-to-prior variance ratio with the ridge
parameter.

\paragraph{Gaussian-process classification.}
With the same GP prior and a Bernoulli likelihood
$p(y_i\mid f)=g(y_if(x_i))$, where $g$ is, for example, the logistic CDF or
$\Phi$, set $r(f)=-N^{-1}\sum_i\log g(y_if(x_i))$. The exact posterior again
satisfies $\dif\rho/\dif\pi\propto e^{-Nr_s}$. It is generally non-Gaussian,
which affects computation but not the density identity. Laplace or
expectation-propagation approximations are separate Gaussian approximations;
they are not automatically exact fixed-feature posteriors.

\paragraph{Common quadratic calculation.}
Let $h_w(x)=\phi(x)^\top w$, $w\in\R^d$, let $F$ have rows
$\phi(x_i)^\top$, and set
$r(w)=\|y-Fw\|^2/(2N)$. For fixed $Q\succ0$ and $\eta>0$, choose
$\pi_Q=\mathcal N(0,(\lambda\eta Q)^{-1})$. Put
$G=F^\top F/N+\eta Q$ and $m=G^{-1}F^\top y/N$. Then
\begin{equation*}
 \rho(\dif w)\propto e^{-\lambda r(w)}\pi_Q(\dif w),
 \qquad \rho=\mathcal N\!\left(m,(\lambda G)^{-1}\right).
\end{equation*}
The mean and mode both minimize $r(w)+\eta w^\top Qw/2$. This calculation
is the common finite-dimensional Gibbs law underlying the next three model
families.

\paragraph{Kernel ridge regression.}
Define the fixed kernel
$k_Q(x,x')=\phi(x)^\top Q^{-1}\phi(x')$ and its training Gram matrix
$K=FQ^{-1}F^\top$. The Gaussian mean above satisfies
\begin{equation*}
Fm=K(K+N\eta I_N)^{-1}y,
\end{equation*}
which is exactly the vector of kernel-ridge fitted values. Thus kernel ridge is
not itself a probability measure; it is the mean and mode of this quadratic
Gibbs posterior. The construction is a valid use of \cref{lem:reduction}
when $\phi,Q,\eta$, and the Gaussian prior are fixed independently of the
PAC-Bayes sample. General fixed kernels are equivalently represented through
their GP priors on functions.

\paragraph{Fixed-shape Gaussian variational posteriors.}
Let
$r(w)=w^\top A_s w/2-b_s^\top w+c_s$, let
$\pi=\mathcal N(0,P_0^{-1})$, and let
$q_s=\mathcal N(m_s,\Sigma_s)$. Direct comparison of Gaussian log-densities
shows that $q_s$ is fixed-feature exactly when
\begin{equation*}
 \Sigma_s^{-1}-P_0-\lambda A_s=M,
 \qquad \Sigma_s^{-1}m_s-\lambda b_s=a,
\end{equation*}
for data-independent $M$ and $a$; these residual natural parameters form the
fixed feature (quadratic and linear sufficient statistics, respectively).
These identities are necessary and sufficient in the nonsingular Gaussian
case. In particular, a prescribed covariance $\Sigma_s\equiv\Sigma$ is covered
when $A_s$ is fixed and the mean natural parameter obeys
$\Sigma^{-1}m_s-\lambda b_s=a$. An arbitrary covariance optimized from the
same sample is not covered unless its variation is exactly the contribution
$\lambda A_s$ already present in the empirical risk.

\paragraph{Frozen last-layer networks.}
Let a feature extractor $\phi_\omega\colon\X\to\R^d$ be fixed independently
of the PAC-Bayes sample and train only
$h_w(x)=w^\top\phi_\omega(x)$ with squared loss and a Gaussian prior on $w$.
Taking $F_{i\cdot}=\phi_\omega(x_i)^\top$ and $Q=I$ in the common quadratic
calculation yields a Gaussian Gibbs posterior whose mean and mode are the
ridge-trained last layer. If $\omega$ was trained on the same sample and then
treated as fixed, the required data independence fails; sample splitting or a
data-dependent-prior theorem is then needed.

\paragraph{NTK-linearized networks.}
Fix an initialization $\vartheta_0$ independently of the sample and linearize
$f_\vartheta$ as
$f_{\vartheta_0}(x)+J_{\vartheta_0}(x)^\top u$, where
$u=\vartheta-\vartheta_0$. After subtracting the fixed offset, this becomes a linear model with design rows $J_{\vartheta_0}(x_i)^\top$. A Gaussian prior
on $u$ and squared loss therefore give the preceding Gaussian Gibbs law, while
the Gram matrix $JJ^\top$ is the empirical neural tangent kernel. Its mean and
mode coincide with NTK kernel ridge. The claim does not extend to a changing
Jacobian or to a linearization point fitted on the same sample without an
additional data-dependence argument.

\section{Gaussian Mixture Details} \label{sec:gmm-details}
We give the construction underlying \cref{sec:GMM}. For
$x\in\R^d$, the data-generating density is
$$
p_\theta(x)=w\mathcal{N}(x;\mu_1,\sigma_1^2I_n)+(1-w)\mathcal{N}(x;\mu_2,\sigma_2^2I_n),
$$
with
$\theta=(w,\mu_1,\mu_2,\sigma_1^2,\sigma_2^2)\in\Theta$. The goal is
posterior inference for $\theta$ and evaluation of the deployed posterior
predictive, not prediction of the latent assignments themselves. PI consists
of the generating assignments $z_i\in\{1,2\}$ observed during training. We write $w_1 = w$, $w_2 = 1-w$.

In both cases we use independent conjugate priors over the mixing weight and
the component parameters,
\begin{equation}
\pi(\theta) = \pi(w)\prod_{k=1}^2 \pi(\mu_k,\sigma_k^2), \;\pi(w)=\mathrm{Beta}(w;1,1)\,,
\end{equation}
with
\begin{align}
\pi(\mu_k,\sigma_k^2) &= \mathrm{NIG}(\mu_k,\sigma_k^2\mid m_0,\kappa,\alpha,\beta) \nonumber\\
&= \mathcal{N}\!\left(\mu_k;m_0,\tfrac{\sigma_k^2}{\kappa}I_n\right)
    \mathrm{Inv\text{-}Gamma}(\sigma_k^2;\alpha,\beta),
\end{align}
where
\begin{equation*}
\mathrm{Inv\text{-}Gamma}(\sigma^2;\alpha,\beta)
 = \frac{\beta^\alpha}{\Gamma(\alpha)}(\sigma^2)^{-\alpha-1}
   \exp\!\left(-\frac{\beta}{\sigma^2}\right)\,,
\end{equation*}
with mean $\E[\sigma^2]=\beta/(\alpha-1)$ for $\alpha>1$. The $\alpha$ parameter
controls the concentration and $\beta$ sets the scale.
Coupling $\mu_k$ to $\sigma_k^2$ through the shared factor
$\sigma_k^2/\kappa$ --- rather than giving $\mu_k$ an independent prior --- is
what makes $\pi(\mu_k,\sigma_k^2)$ conjugate to the Gaussian likelihood: the
posterior is again Normal-Inverse-Gamma with closed-form updates.

\paragraph{Case 1: without privileged information.}
Given a realization of the random sample $s=(x_i)_{i=1}^N$, the posterior is
\begin{equation}
\label{eq:gmm-posterior-noPI}
p(\theta\mid S=s) \propto \pi(\theta) \prod_{i=1}^N\sum_{k=1}^2 w_k\,\mathcal{N}(x_i;\mu_k,\sigma_k^2 I_n).
\end{equation}
Expanding the product of mixtures yields an exact sum over $2^N$
assignments, which is computationally prohibitive as $N$ grows. We therefore
sample the never-PI posterior by Metropolis--Hastings: in each step, a proposal $\theta'$ is accepted
with probability $\min\bigl(1,\ \tfrac{\pi(\theta')\Pr(S = s\mid\theta')}{\pi(\theta)\Pr(S = s\mid\theta)}\bigr)$. The chain has stationary distribution $p(\theta\mid S=s)$, and by the ergodic theorem the empirical distribution converges to the posterior.

\paragraph{Case 2: with privileged information.}
Revealing $z_i$ gives $s^*=\{(x_i,z_i)\}_{i=1}^N$ and collapses the mixture
into a conditional likelihood,
\begin{equation}
p(S^* \mid \theta) = \prod_{i=1}^N w_{z_i} \mathcal{N}(x_i; \mu_{z_i}, \sigma_{z_i}^2 I_n)\,.
\end{equation}
Each point now contributes to exactly one component, so the likelihood is a
single product rather than a product of sums. With
$n_k=\#\{i:z_i=k\}$, the posterior separates into three independent parts:
\begin{align}
&p(\theta\mid S^* = s^*) \propto \underbrace{w^{n_1}(1-w)^{n_2}\,\mathrm{Beta}(w\mid1,1)}_{\text{Beta--Binomial}}\prod_{k=1}^2
\underbrace{\Big[\textstyle\prod_{i:z_i=k}\mathcal{N}(x_i\mid\mu_k,\sigma_k^2 I_n)\Big]
\mathrm{NIG}(\mu_k,\sigma_k^2)}_{\text{Gaussian--NIG (conjugate)}} .
\end{align}
The posterior factorizes completely, enabling exact ancestral sampling:
\begin{align}
w &\sim \mathrm{Beta}(1+n_1,\,1+n_2),\\
\sigma_k^2 &\sim \mathrm{Inv\text{-}Gamma}(\alpha_k',\beta_k'),\\
\mu_k\mid\sigma_k^2 &\sim \mathcal{N}\!\left(m_k',\tfrac{\sigma_k^2}{\kappa_k'}I_n\right).
\end{align}
The updated hyperparameters $(\kappa_k',m_k',\alpha_k',\beta_k')$ follow in
closed form by completing the square in the Gaussian exponent against the NIG
prior, after decomposing
$\sum_{i:z_i=k}\|x_i-\mu_k\|^2 = S_k + n_k\|\mu_k-\bar x_k\|^2$ about the
component sample mean $\bar x_k$. This yields exact i.i.d.\ posterior draws
without MCMC.

Both configurations are represented on the common parameter space $\Theta$; PI enters only through the loss, so the deployment map is the identity, $T=\mathrm{id}_\Theta$, and consequently $\kappa_T=0$ in~\cref{corollary:deployment-certificate}.
The pointwise negative log-likelihood loss functions are
\begin{align*}
&\ell(p_\theta(x_i)) = -\ln\!\Big[w_1 \mathcal{N}(x_i; \mu_1, \sigma_1^2 I_n) + w_2\mathcal{N}(x_i; \mu_2, \sigma_2^2 I_n) \Big]\,, \\
&\ell^*(p_\theta^*(x_i, z_i)) = -\ln w_{z_i}\mathcal{N}(x_i; \mu_{z_i}, \sigma_{z_i}^2 I_n)\,,
\end{align*}
where $w_1=w$, $w_2 = 1-w$, and the empirical risks $r,r^*$ are the sample averages of $\ell$ and $\ell^*$, respectively. The partition functions are evaluated by Monte Carlo over the prior,
\begin{equation} \label{eq:gmm-Z-mc}
Z_\lambda=\E_{\theta\sim\pi}\!\left[e^{-\lambda r(\theta)}\right] \approx\frac1n\sum_{j=1}^n e^{-\lambda r(\theta^{(j)})}, \, \theta^{(j)}\sim\pi,
\end{equation}
and analogously for $Z_\lambda^*$ with $r^*$, giving
$\Delta Z=\log Z^*_\lambda-\log Z_\lambda$. Because the loss function is a negative log-likelihood and hence unbounded, the Catoni certificate applies to a bounded surrogate; we only use $\Delta Z$ here as a diagnostic quantity, as discussed in~\cref{sec:GMM}.

Given the posteriors $\rhoNever$ and $\rhoDeployed$, we take the Bayesian predictive distribution for the probability density at $x$:
$$
\overline{p}_{\text{never}}(x)\coloneqq\!\int_\Theta p_\theta(x) \rhoNever(\dif\theta),\quad\overline{p}_{\textsc{lupi}}(x)\coloneqq\!\int_\Theta p_\theta(x) \rhoDeployed(\dif\theta)\,.
$$
We compare the predictive distribution --- an average over the posterior --- to a baseline of the point-estimate maximum a posteriori (MAP). A MAP estimate $\hat\theta_{\textsc{map}}$ is a point estimator, obtained by maximizing the unnormalized posterior measure:
$$
\hat\theta_{\textsc{map}}=\operatorname*{\arg\,\max}_{\theta\in\Theta} \sum_{i=1}^N\log p_\theta(x_i)+\log\pi(\theta) \,.
$$
A MAP estimate $\hat\theta_{\textsc{map}}$ predicts with the plug-in density $p_{\hat\theta_{\textsc{map}}}(x)$, i.e., the most probable parameter value from the posterior. However, for small $N$, the posterior is often multi-modal, and a MAP only picks a single mode thereof, which leads to an overconfident prediction.

\end{document}